\documentclass{article}

\usepackage[utf8]{inputenc} 
\usepackage[T1]{fontenc}    
\usepackage{hyperref}       
\usepackage{url}            
\usepackage{booktabs}       
\usepackage{amsfonts}       
\usepackage{nicefrac}       
\usepackage{microtype}      
\usepackage{xcolor}
\usepackage{graphicx}
\usepackage{bbm}

\usepackage{algorithm}
\usepackage{algpseudocode}
\usepackage{amsmath,amssymb}
\usepackage{siunitx}

\usepackage{amsthm}
\newtheorem{proposition}{Proposition}
\newtheorem{corollary}{Corollary}

\usepackage[square,numbers]{natbib}
\usepackage[preprint]{neurips_2026}

\usepackage{booktabs}
\usepackage{xcolor}
\usepackage{colortbl}
\usepackage{pifont}
\usepackage{multirow}
\newcommand{\cmark}{\ding{51}}  
\newcommand{\xmark}{\ding{55}}  

\newcommand{\proc}[1]{\textsc{#1}}

\title{Mechanistic Reaction Prediction via Discrete Flow Matching on Graph-Structured Electron Occupation}

\author{%
  David S.~Hippocampus\thanks{Use footnote for providing further information
    about author (webpage, alternative address)--\emph{not} for acknowledging
    funding agencies.} \\
  Department of Computer Science\\
  Cranberry-Lemon University\\
  Pittsburgh, PA 15213 \\
  \texttt{hippo@cs.cranberry-lemon.edu} \\
}
\author{Xuan-Vu Nguyen\textsuperscript{1}, 
  Octavian Susanu\textsuperscript{1},  Daniel Armstrong\textsuperscript{1}, 
  Philippe Schwaller\textsuperscript{1,2} \\
  \textsuperscript{1}\'Ecole Polytechnique F\'{e}d\'{e}rale de Lausanne (EPFL) \\
  \textsuperscript{2}National Centre of Competence in Research (NCCR) Catalysis \\
  \texttt{\{nguyen.nguyen,philippe.schwaller\}@epfl.ch} \\
}

\begin{document}

\maketitle

\begin{abstract}
    Chemical reactions are fundamentally transformations in electron space, yet most machine learning approaches model them either through \textit{de novo} generation of product molecules or through heuristic graph edits that operate directly on molecular topology.
    We introduce MAELLE (\textbf{M}ech\textbf{A}nistic \textbf{E}dit f\textbf{L}ow-matching on e\textbf{L}ectron r\textbf{E}arrangements), which instead models reactions as discrete flow matching over electron occupation vectors.
    Concretely, we formulate the reactant-to-product mapping as a Continuous-time Markov Chain (CTMC) over the graph-structured integer-valued electron occupation space defined on all bonding, non-bonding, and hydrogen sites.
    To construct the interpolants between the reactants and products, we generalize the discrete flow matching mixture path to an edit-based formulation, where the electron moves are interpolated using Optimal Transport, yielding a mechanism-like set of moves without elementary step annotations.
    MAELLE achieves competitive performance on the USPTO-480K benchmark compared with leading reaction prediction models.
    Beyond in-distribution accuracy, we evaluate robustness across two out-of-distribution settings - structural complexity and reaction type - and find that MAELLE maintains strong performance where existing methods degrade.
    Finally, because the learned flow operates over the full electron redistribution, MAELLE naturally recovers mechanistic trajectories that align with known chemistry and can predict side products of a reaction. 
\end{abstract}


\section{Introduction}

Predicting the outcome of a chemical reaction is one of the central problems at the interface of chemistry and machine learning, with consequences for drug discovery, materials design, and the broader question of how to realize a target molecule once it has been proposed \citep{coley2026grand,corey1969computer}. The space of possible reactions is larger than the already intractable space of drug-like molecules, and a single retrosynthetic plan may chain together a dozen or more steps, each of which must succeed for the route to be viable \citep{schwaller2020predicting}. Reliable prediction, with calibrated uncertainty, is therefore a prerequisite for automating any meaningful portion of synthetic chemistry. 

Reaction prediction has cycled through a succession of representations, each reflecting the dominant ML paradigm of its moment: fixed molecular fingerprints classified by similarity~\citep{morgan1965generation, rogers2010extended, Segler2017neural, coley2017prediction}; SMILES-to-SMILES translation with sequence and Transformer models~\citep{Schwaller2018found, schwaller2019molecular}; hybrid graph-encoder/SMILES-decoder architectures~\citep{coley2019graph, tu2022permutation}; and, most recently, generative graph-edit approaches that produce the product by editing the reactant graph directly~\citep{sacha2021molecule, do2019graph}.

All of these approaches treat reactions as opaque input-to-output mappings, offering no account of the underlying transformation, while chemists reason via arrow-pushing diagrams that track electron flow between donor and acceptor sites~\citep{grossman2003art}. As black-box predictors, they are exposed to spurious shortcuts in USPTO -- patent-level stylistic signatures~\citep{xuan2025tempre} and substrate-level cues (e.g., a single free hydroxyl among protected ones telegraphing an alcohol-selective transformation). Top-$k$ accuracy on standard benchmarks therefore conflates first-principles reactivity reasoning with exploitation of such shortcuts, making out-of-distribution evaluation a more faithful test~\citep{gil2023holistic,bradshaw2025challenging,tran2026quantifying}.

A few recent works have sought to incorporate mechanistic reasoning into reaction prediction. Early work by \citet{Kayala2011} and \citet{kayala2012reactionpredictor} pioneered the prediction of reactions at the mechanistic level by learning electron-flow patterns, decomposing the problem into identifying reactive sites and predicting electron sources and sinks. \citet{bradshaw2018generative} later extended this direction with a GNN-based model that predicts elementary steps for reactions with linear electron flow (LEF) topology, though this restricts coverage to a subset of reaction mechanisms. More recently, \citet{joung2025electron} model electron redistribution via continuous flow matching over bond-electron (BE) matrices, but require mechanistic trajectories imputed from expert-curated templates~\cite{joung2024reproducing,tavakoli2024pmechdb,tavakoli2023rmechdb}, limiting scalability to new reaction types.

In this work, we introduce MAELLE (\textbf{M}ech\textbf{A}nistic \textbf{E}dit f\textbf{L}ow-matching on e\textbf{L}ectron r\textbf{E}arrangements), which models reactions as discrete flow matching (DFM) over integer-valued electron occupation vectors defined on the molecular graph (Figure~\ref{fig:1st_fig}). MAELLE formulates reaction prediction as a Continuous-time Markov Chain (CTMC) in the electron space: given a reactant, the model learns a rate function that drives the electron occupation from the reactant state to the product state through a sequence of sparse, interpretable edit moves, without requiring elementary step annotations. The intermediate trajectory is constructed via optimal transport over electron sites, and the mixture path factorizes over an auxiliary move-state space, enabling tractable training through the discrete flow matching framework.

\begin{figure}[t]
    \centering
    \includegraphics[width=\linewidth]{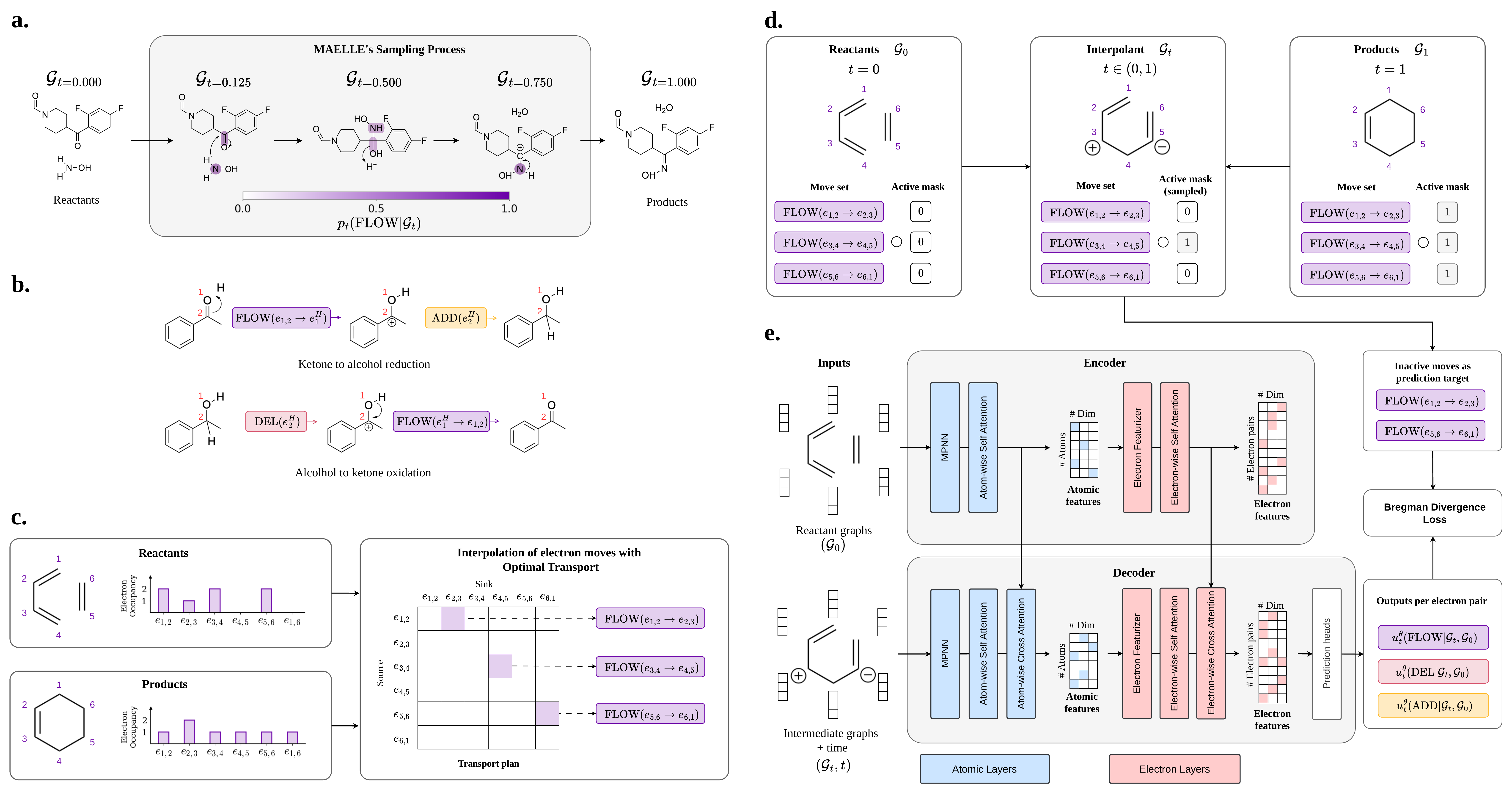}
    \caption{Conceptual overview of MAELLE: \textbf{a.} MAELLE's sampling process with mechanism-like trajectories that model the transition from reactants to products as CTMC.  \textbf{b.} The three elementary actions on electrons, \textsc{FLOW}, \textsc{DEL}, and \textsc{ADD}, demonstrated on an oxidation and a reduction reactions. At training time: \textbf{c.} Electron actions are interpolated using OT; \textbf{d.} interpolant graphs are sampled by sampling a subset of electron moves and applying them to the reactant graphs. \textbf{e.}~MAELLE model architecture and learning objective.}
    \label{fig:1st_fig}
\end{figure}

Our contributions are: (i) we generalize DFM to graph-structured electron spaces with three chemistry-aware edit operations (flow, deletion, addition), and lift the problem to an auxiliary move-state space derived from optimal transport over electron occupations, yielding a factorized mixture path with tractable training and no need for elementary-step labels; (ii) on USPTO-480K we are competitive with the best SMILES- and graph-edit baselines, and on out-of-distribution splits along molecular weight and reaction type we outperform all baselines; (iii) framing reaction prediction generatively gives MAELLE calibrated, ranked product predictions, enabling side-product prediction validated by LLM-as-judge plausibility.

\section{Related Work}
\begin{table}[t]
\caption{Comparison of reaction prediction approaches. 
}
\label{tab:comparison}
\centering
\small
\setlength{\tabcolsep}{5pt}
\begin{tabular}{@{}lcccccc@{}}
\toprule
\textbf{Method} 
& \textbf{Input}
& \textbf{Output} 
& \textbf{Mech. step} 
& \textbf{Directional} 
& \textbf{Broad} 
& \textbf{Atom} \\
 
& \textbf{} 
& \textbf{}
& \textbf{label-free} 
& \textbf{elec. re-dist.} 
& \textbf{coverage} 
& \textbf{conservative}\\
\midrule
Molecular Transformer~\cite{schwaller2019molecular} 
& SMILES & SMILES & \cmark & \xmark & \cmark & \xmark \\
Graph2SMILES~\cite{tu2022permutation}             
& SMILES & SMILES & \cmark & \xmark & \cmark & \xmark \\
Chemformer~\cite{irwin2022chemformer}               
& SMILES & SMILES & \cmark & \xmark & \cmark & \xmark \\
\midrule
MEGAN~\cite{sacha2021molecule}                       
& Graph & Graph edits & \cmark & \xmark & \cmark & \xmark \\
GTPN~\cite{do2019graph}                              
& Graph & Graph edits & \cmark & \xmark & \cmark & \cmark \\
\midrule
NERF \cite{bi2021non}
& Graph & Electron re-dist. & \cmark & \xmark & \cmark & \cmark \\
FlowER~\cite{joung2025electron}                       
& Graph & Electron re-dist. & \xmark$^\dagger$ & \cmark & \xmark$^\dagger$  & \cmark \\
ELECTRO~\cite{bradshaw2018generative}                   
 & Graph & Electron re-dist. & \cmark & \cmark & \xmark$^\ddagger$ & \cmark \\
\rowcolor{blue!6}
MAELLE (ours)                               
& Graph & Electron re-dist. & \cmark & \cmark & \cmark & \cmark$^\S$ \\
\bottomrule
\end{tabular}

\vspace{4pt}
{\footnotesize 
$^\dagger$FlowER covers 86 expert-curated reaction types and requires a mechanistic dataset with imputed elementary steps.\\
$^\ddagger$ELECTRO is restricted to reactions with linear electron flow topology.\\
$^\S$MAELLE is heavy-atom conservative because protonation and deprotonation are simplified.
}
\end{table}

\subsection{Translation-based reaction prediction}
Early work by \citet{schwaller2019molecular} cast reaction prediction as
translation from the SMILES string of the reactants to that of the products.
Because a molecule admits many valid SMILES strings, these models are typically
trained with SMILES augmentation -- multiple strings per molecule
\citep{schwaller2019molecular, irwin2022chemformer} -- which is data-inefficient.
Graph2SMILES \citep{tu2022permutation} partially addresses this by encoding the
reactants as a 2D graph, removing the need to augment the input, while still
decoding the product as a SMILES string with a Transformer decoder.

Sequence representations such as SMILES \citep{daylightsmiles} and SELFIES
\citep{krenn2020self} make products easy to generate with well-established
autoregressive text models, but they come at a cost: the model generates the
product \emph{de novo}, with no structural link to the reactants. This has three
consequences. First, generation is unconstrained, so there is no guarantee that
the predicted reaction is atom-conservative -- a critical chemical constraint.
Second, because the reactant-product correspondence is not explicit, recovering
what actually changed requires a separate atom-mapping tool
\citep{schwaller2021rxnmapper}; and even when a prediction appears
atom-conserving, nothing ensures a valid mechanistic pathway connects reactants
to products, so a chemically sound transformation and a hallucinated one can
look alike. Third, generating the entire product from scratch makes these models
degrade on molecules whose complexity falls outside the training domain, even
when the underlying chemistry is simple \citep{gil2023holistic,bradshaw2025challenging,tran2026quantifying}.

\subsection{Reactions as graph edits}

To build the chemical change directly into the model, another line of work
predicts reactions as topological transformations of the reactant graph, and
atom conservation can be enforced through the choice of edit operations. MEGAN
\citep{sacha2021molecule} uses EditAtom, EditBond, AddAtom, and AddBenzene,
explicitly modeling a reaction as a sequence of graph edits; operations such as
EditAtom, AddAtom, and AddBenzene break atom conservation, though one can
restrict the operation set to bond-only edits to preserve it, as in GTPN
\citep{do2019graph}.

Graph edits make the transformation explicit and give atom mapping for free, and
atom conservation is achievable -- but the edits are purely topological, so no
mechanistic pathway is guaranteed to connect the two endpoints. Bond formation
is the clearest case: a graph-edit model simply inserts a bond between two atoms,
whereas chemically the electrons forming that bond are drawn from elsewhere in
the molecule.



\subsection{Reactions as electron redistribution}

At a level deeper than topological graph edits, a reaction is a redistribution of
electrons: a sequence of elementary steps in which electrons flow from a source
to a sink, described in chemistry by arrow-pushing notation. Modeling reactions
this way is informative about what actually happens, but capturing the
elementary steps explicitly has traditionally required expert-labeled mechanisms
or heavy physical calculation. \citet{Kayala2011} assemble a dataset of 1{,}630
reactions with 2{,}989 mechanistic steps and train a model to rank orbital
reactivity for a single elementary step, wrapping it in a recursive loop to
reach the product. In the same spirit, \citet{joung2025electron} scale the
mechanistic dataset to 290K reactions and over 2 million elementary steps, and
train a flow-matching model over the bond-electron matrix. Both are informative
but depend on expert-labeled elementary steps, and are therefore less scalable
than the end-to-end approaches above.

ELECTRO \citep{bradshaw2018generative} removes the labeling requirement,
interpolating electron moves directly from end-to-end reactant-product pairs.
However, it is restricted to reactions with linear electron flow topology --
those whose mechanism has a well-defined start and end to the electron flow --
and so does not cover cyclic mechanisms.
NERF \citep{bi2021non} also avoids
mechanism labels but takes a non-autoregressive, one-shot view: it predicts, for
each bond, the net number of electron pairs added or removed between reactants
and products. Because it models only the change at each bond and not a transfer
between sites, the direction of electron flow is left implicit -- the prediction
records that a bond gained or lost electrons, but not where those electrons came
from or went. It also models bond electrons only, excluding lone pairs and other
valence electrons from the representation.

MAELLE models reactions as directional electron redistribution while guaranteeing
heavy-atom conservation (Table~\ref{tab:comparison}). Like ELECTRO, it needs no
elementary-step labels, but it is not limited to linear flow: our FLOW operation
moves an electron pair from a source site to a sink site, making the direction of
flow explicit -- answering where electrons come from and where they go -- and
composing freely into cyclic as well as linear mechanisms. In place of mechanism
labels, we recover the net electron moves with Optimal Transport
(Section~\ref{sec:ot}), sample interpolant graphs between reactants and products,
and train a discrete flow-matching model over these electron edits.

\subsection{Discrete generative modeling on graphs}

From the standpoint of generative modeling, discrete generative models on graphs typically cast generation as iterative substitution of node and edge types under a denoising diffusion or flow-matching process that maps from a source distribution that is easy to sample, such as a uniform distribution, to the data distribution. 
DiGress \cite{vignac2022digress} and DeFoG \cite{qin2024defog} generate graphs by employing discrete denoising diffusion processes \cite{sohl2015deep,ho2020denoising} and discrete flow matching \cite{campbell2024generative}, respectively.

More related to our setting,
RetroBridge \cite{igashov2024retrobridge} tackles the problem of retrosynthesis, i.e., predicting possible precursors given the products, in a reversed setting compared to our work. 
The model learns a bridge between two coupled distributions of products and reactants via Markov bridge \cite{ccetin2016markov}.

Both lines model generation as resampling of categorical node/edge types — from a noise prior (DiGress, DeFoG) or a coupled endpoint distribution (RetroBridge). MAELLE instead operates over integer-valued electron occupation, where transitions are electron transfers (flow/add/delete) rather than type substitutions; this makes the process atom-conserving by construction and yields mechanistically interpretable trajectories.

\section{Background}
\subsection{Discrete Flow Matching}
\label{sec:dfm-bg}
Let $\mathcal{T}$ be a vocabulary and $\mathcal{X}=\mathcal{T}^{*}=\bigcup_{n=0}^{N}\mathcal{T}^{n}$
the space of token sequences of length at most $N$. Discrete flow matching
(DFM) \citep{campbell2024generative, gat2024discrete} learns a Continuous-time
Markov Chain (CTMC) that transports a source distribution $p_0$ at $t=0$ to a
target $p_1$ at $t=1$. As in the continuous case, the central construction is a
\emph{conditional mixture path} $p_t(x\mid x_0,x_1)$ with boundary conditions
$p_0(x\mid x_0,x_1)=\delta_{x_0}(x)$ and $p_1(x\mid x_0,x_1)=\delta_{x_1}(x)$;
the endpoints $(x_0,x_1)$ are drawn from a coupling $\pi(x_0,x_1)$ whose
marginals are the source and target, and marginalizing the conditional path over
$\pi$ gives the marginal path $p_t$ that interpolates $p_0$ and $p_1$.

For sequences of a common length $n$, the standard choice is the factorized,
token-wise mixture path
\begin{equation}
  p_t(x^i \mid x_0, x_1)
  = (1-\kappa(t))\,\delta_{x_0^i}(x^i) + \kappa(t)\,\delta_{x_1^i}(x^i),
  \qquad
  p_t(x \mid x_0, x_1) = \prod_{i=1}^{n} p_t(x^i \mid x_0, x_1),
  \label{eq:tokenwise}
\end{equation}
where the schedule $\kappa:[0,1]\to[0,1]$ is monotone with $\kappa(0)=0$,
$\kappa(1)=1$. Each token independently switches from its source value $x_0^i$
to its target value $x_1^i$, and does so by time $t$ with probability
$\kappa(t)$. The CTMC that generates \eqref{eq:tokenwise} has the token-wise
marginal rate
\begin{equation}
  u_t(x^i \mid x_t)
  = \frac{\kappa'(t)}{1-\kappa(t)}\,
    \bigl[\,p_{1\mid t}(x^i \mid x_t) - \delta_{x_t^i}(x^i)\,\bigr],
  \label{eq:tokenwise-rate}
\end{equation}
where $p_{1\mid t}(x^i\mid x_t)$ is the model's posterior over the target token
at position $i$. Training therefore reduces to predicting $p_{1\mid t}$ per
position, and the coefficient $\kappa'(t)/(1-\kappa(t))$ is the rate at which a
position that has not yet reached its target switches to it.

\subsection{Edit Flows}
\label{sec:editflow-bg}
The token-wise path \eqref{eq:tokenwise} assumes a fixed, common length and a position-to-position correspondence between $x_0$ and $x_1$. Edit Flows
\citep{havasi2025edit} remove this assumption, allowing sequences of differing length related by \emph{edit operations} -- insertion, deletion, and substitution -- rather than position-wise substitution alone. To recover a factorized path in this setting, the construction is lifted to an auxiliary space of aligned (padded) sequences $\mathcal{Z}=(\mathcal{T}\cup\{\epsilon\})^{N}$,
where $\epsilon$ is a padding symbol, together with the map
$f_{\mathrm{rm-blanks}}:\mathcal{Z}\to\mathcal{X}$ that deletes all
padding. Given endpoints $x_0,x_1$ and a chosen alignment $z_0,z_1\in\mathcal{Z}$
with $f_{\mathrm{rm-blanks}}(z_j)=x_j$, the mixture path is defined on
the augmented space $\mathcal{X}\times\mathcal{Z}$ as
\begin{equation}
  p_t(x, z \mid z_0, z_1)
  = p_t(z \mid z_0, z_1)\,\delta_{f_{\mathrm{rm}\text{-}\epsilon}(z)}(x),
  \label{eq:editflow-aug}
\end{equation}
where $p_t(z\mid z_0,z_1)$ is the token-wise path \eqref{eq:tokenwise} applied on
the aligned sequences. The data-space path is the pushforward of the auxiliary
path through $f_{\mathrm{rm-blanks}}$: the factorization lives in
$\mathcal{Z}$, while the observed sequence $x=f_{\mathrm{rm-blanks}}(z)$
can have variable length. Because each aligned position independently takes its
source or target value, this is equivalent to sampling edit operations to
apply to $x_0$.
We adopt this view in this paper, and show that it is equivalent to the token-wise sampling in Section~\ref{sec:dfm} (Proposition~\ref{prop:equiv}).

\section{Pseudo mechanistic reaction prediction with MAELLE}

\subsection{Electron occupation as the main source of change}
\label{sec:problem_statement}
We represent a molecule as an augmented graph $\mathcal{G} = (\mathcal{V}, \mathcal{A}, \mathcal{E}, o)$, where $\mathcal{V}$ is the set of heavy atoms, $\mathcal{A}$ is the heavy-atom adjacency, and $\mathcal{E}$ is a fixed set of \emph{electron sites} comprising bonding sites $e_{ij}$ ($i \neq j$), lone-pair sites $e_{ii}$, and per-atom hydrogen sites $e^H_i$ (hydrogens on the same heavy atom are topologically equivalent and pooled into one site~\cite{chen2025predicting}). 
The occupation vector $o \in \mathbb{Z}^{|\mathcal{E}|}$ counts electron pairs at each site, including virtual ones with $o^e = 0$.
See Appendix~\ref{app:problem_statement} for the formal definitions and an illustration on ethanol.

A chemical reaction maps reactants $\mathcal{G}_x$ to products $\mathcal{G}_y$ by redistributing electrons, i.e., changing $o$. The atom set $\mathcal{V}$ is preserved. Moreover, the heavy-atom adjacency can be recovered deterministically from the occupation through
\begin{equation}
    a_{ij} = \mathbbm{1}[\,o^{e_{ij}} > 0\,],
    \label{eq:o_to_a}
\end{equation}
so $p(\mathcal{G}_y \mid \mathcal{G}_x) = p(o_y \mid \mathcal{G}_x)$. 
Thus, we could model a reaction by acting entirely on the vector $o$.

\subsection{Edit operations on electron occupation}

In this section, we describe the electron edit operations that enacts on $o$.
Let $\mathbf{1}_{e} \in \{0,1\}^{|\mathcal{E}|}$ denote the one-hot vector with a single nonzero entry at the position corresponding to site $e \in \mathcal{E}$.
We define the following actions on $o$ (Figure \ref{fig:1st_fig}b):

\paragraph{FLOW} Transfer one electron pair from site $e_{\text{source}}$ to site $e_{\text{sink}}$, requiring $o(e_{\text{source}}) > 0$:
    \begin{equation}
        \label{eq:flow}
        \textsc{flow}(e_{\text{source}} \rightarrow e_{\text{sink}})(o) = o - \mathbf{1}_{e_{\text{source}}} + \mathbf{1}_{e_{\text{sink}}}
    \end{equation}

\paragraph{DEL} Remove one electron pair from site $e$, requiring $o^e > 0$:
    \begin{equation}
        \label{eq:del}
        \textsc{del}(e)(o) = o - \mathbf{1}_{e}
    \end{equation}

\paragraph{ADD} Add one electron pair to site $e$:
    \begin{equation}
        \label{eq:add}
        \textsc{add}(e)(o) = o + \mathbf{1}_{e}
    \end{equation}

The sparsity of these three operations enables us to define a tractable distribution of the intermediate states between the occupation vectors of the reactants and products $o_x$ and $o_y$.

\subsection{Interpolation of pseudo mechanistic steps via Optimal Transport}
\label{sec:ot}
To obtain a set of moves that transform $o_x$ into $o_y$, we solve a balanced integer Optimal Transport problem over the electron sites. To handle reactions where the total electron count is not conserved, we augment the site set with a virtual sink/source $e_\varnothing$, yielding $\bar{\mathcal{E}} = \mathcal{E} \cup \{e_\varnothing\}$ with augmented occupations $\bar{o}_x, \bar{o}_y$. The transport cost between two sites is the minimum shortest-path distance between their constituent atoms on either the reactant or product graph, with a large penalty $c_\varnothing$ for transport to or from $e_\varnothing$.
The optimal transport plan could be solved with the simplex algorithm, where the design of the objective, marginal constraints, and cost matrix can be found in  Appendix~\ref{app:ot_details}.
This results in the optimal transport matrix $T^* \in \mathbb{Z}^{|\bar{\mathcal{E}}| \times |\bar{\mathcal{E}} |}$.

\paragraph{The move set as a deterministic alignment.} Each non-zero off-diagonal entry $T^*_{e,e'} > 0$ is translated to an electron action $m \in \{\textsc{Flow}, \textsc{Del}, \textsc{Add}\}$ (Eqs.~\ref{eq:flow}--\ref{eq:add}):

\begin{equation}
    m = 
    \begin{cases}
        \textsc{FLOW}(e \rightarrow e') \quad \text{if} \quad e \neq e_\varnothing, e' \neq e_\varnothing \\
        \textsc{DEL}(e) \quad \text{if} \quad  e \neq e_\varnothing, e' = e_\varnothing \\
        \textsc{ADD}(e') \quad \text{if} \quad e = e_\varnothing, e' \neq e_\varnothing \\
    \end{cases}
    \label{eq:source_sink}
\end{equation}

Together, they form an un-ordered set of move $\mathcal{M}$, such that we would recover $o_y$ if we fully apply $\mathcal{M}$ to $o_x$:

\begin{equation}
  \mathcal{M} \;=\; \{m_1,\dots,m_K\},
  \qquad K \;=\; \sum_{e\neq e'} T^\ast_{e,e'},
  \label{eq:moveset}
\end{equation}
\begin{equation}
    \mathrm{ApplyEdit}(o_x, \mathcal{M}) = o_y
\end{equation}

\subsection{Discrete Flow Matching over Electron Edits}
\label{sec:dfm}

\paragraph{Edit-based mixture path}
Let $\mathcal{S} \subseteq \mathcal{M}$ be a subset of the moves, interpreted as
the moves that have been applied so far. We define the interpolant occupation corresponding to $\mathcal{S}$ as $o_{\mathcal{S}} = \mathrm{ApplyEdit}(o_x, \mathcal{S})$, which by
\eqref{eq:moveset} recovers $o_x$ at $\mathcal{S}=\varnothing$ and $o_y$ at
$\mathcal{S}=\mathcal{M}$. To interpolate between the two endpoints, we place a
distribution over which subset has been applied at time $t$ and read the
occupation off through $\mathrm{ApplyEdit}$. Following a monotone schedule
$\kappa:[0,1]\to[0,1]$ with $\kappa(0)=0$, $\kappa(1)=1$, each move is applied
independently with probability $\kappa(t)$, giving the edit-based mixture path
\begin{equation}
  p_t(o \mid o_x, o_y)
  =
  \sum_{\mathcal{S} \subseteq \mathcal{M}}
  \underbrace{
  \kappa(t)^{|\mathcal{S}|}\,\bigl(1-\kappa(t)\bigr)^{|\mathcal{M}|-|\mathcal{S}|}
  }
  _{\displaystyle p_t(\mathcal{S}\,\mid\, o_x,o_y)}
  \;\mathbbm{1}\!\left[\,\mathrm{ApplyEdit}(o_x, \mathcal{S}) = o\,\right]
  \label{eq:edit-mixture}
\end{equation}
Equivalently, the path is the law of $\mathrm{ApplyEdit}(o_x, \mathcal{S}_t)$
where $\mathcal{S}_t$ is obtained by including each move $m_k \in \mathcal{M}$
independently with probability $\kappa(t)$: sampling an interpolant amounts to flipping one $\mathrm{Bernoulli}(\kappa(t))$ mask per move and applying the selected subset to the reactant occupation (Algorithm~\ref{alg:sample}). Since
each move is an independent binary variable, the weight
$p_t(\mathcal{S}\mid o_x,o_y)$ factorizes over $\mathcal{M}$, and the endpoints
follow directly: at $t=0$ no moves are sampled so $p_0=\delta_{o_x}$, and at $t=1$
all moves are applied so $p_1=\delta_{o_y}$.

The path \eqref{eq:edit-mixture} is the electron-space instance of the
augmented-space mixture path of Edit Flows~\citep{havasi2025edit}. There,
the path is defined over an auxiliary space of padded sequences and pushed to
token sequences by removing padding; here it is defined over subsets of
electron moves and pushed to occupations by $\mathrm{ApplyEdit}$. In both, an
interpolant is obtained by sampling which edits to apply, and the two constructions yield the same conditional path. We state this equivalence in Proposition~\ref{prop:equiv}.

\paragraph{Conditional and marginal rates}
The mixture path \eqref{eq:edit-mixture} is generated by a CTMC whose
conditional rate fires each unapplied move independently. Following the standard
DFM derivation, the rate of switching a move $m_k$ from unapplied to applied is
the hazard rate of its $\mathrm{Bernoulli}(\kappa(t))$ flip,
\begin{equation}
  u_t\bigl(m_k \mid o_t, o_x, o_y\bigr) \;=\; \frac{\kappa'(t)}{1-\kappa(t)},
  \label{eq:cond-rate}
\end{equation}
with the reverse rate zero (moves fire once and stay fired). This conditional
rate assumes the product $o_y$ is known. At inference it is not, so the training
target is the \emph{marginal} rate, obtained by taking the expectation over the
posterior of the endpoints given the current state,
\begin{equation}
  u_t(m \mid o_t, \mathcal{G}_x)
  \;=\;
  \mathbb{E}_{p_t(o_y \mid o_t, \mathcal{G}_x)}\;
  u_t\bigl(m \mid o_t, o_x, o_y\bigr),
  \label{eq:marginal-rate}
\end{equation}
which depends only on the current state and the reactant, and is therefore
learnable by a neural network. By the auxiliary-variable result of
\citet{havasi2025edit}, a model trained to match the conditional rate in the move space yields the correct marginal rate after marginalization.

\paragraph{Training}
We model the reactant-to-product mapping as a coupling $\pi(\mathcal{G}_x, \mathcal{G}_y)$ over reactant-product pairs. 
At training time, $\pi$ is the empirical distribution of reactions in the training set, each of which provides one coupled pair.
The training target is the marginal rate $u_t(m\mid o_t, \mathcal{G}_x)$ of
Eq.~\eqref{eq:marginal-rate}, which we approximate with a graph neural network
$u^\theta_t(m\mid \mathcal{G}_x, \mathcal{G}_t, t)$.
We adopt an encoder--decoder architecture: the encoder processes the fixed reactant $\mathcal{G}_x$ into conditioning representations, and the decoder processes the interpolant $\mathcal{G}_t$ and cross-attends to them at every layer, so the predicted rates are conditioned on the reactant throughout, analogous to conditioning on the source in conditional flow matching. 
Unlike standard molecular GNNs, which typically featurize
only atoms and bonds, our edits act on the electron occupation $o$ over the full site set $\mathcal{E}$, including bonding sites, lone pairs, and per-atom hydrogen
sites, so an \textsc{ElectronFeaturizer} lifts atom representations into a
vector for every electron site (Figure~\ref{fig:1st_fig}e). 
The per-move rates are read off
these electron-site representations, matching the granularity at which the edits operate. Full architectural details can be found in Appendix~\ref{app:architecture}.

We fit $u^\theta_t$ with the Bregman-divergence objective of
\citet{holderrieth2024generator, havasi2025edit}, which for a CTMC rate
reduces to a total-rate penalty minus a log-rate reward on the correct
transitions. Let $\mathcal{M}_{\mathrm{target}}=\mathcal{M}\setminus\mathcal{S}$ be
the moves not yet applied at the sampled interpolant. 
The loss is

\begin{equation}
  \mathcal{L}(\theta)
  \;=\;
  \mathop{\mathbb{E}}_{
  \substack{
  (\mathcal{G}_x,\mathcal{G}_y)\sim\pi\\
        t\sim\mathcal{U}(0,1)\\
        \mathcal{S}\sim p_t(\cdot\mid o_x,o_y)}
        }
  \left[\;
    \underbrace{\sum_{m}u^\theta_t(m\mid\mathcal{G}_x, \mathcal{G}_t, t)}_{\displaystyle \Lambda_{\theta,t}}
    \;-\;
    \frac{\kappa'(t)}{1-\kappa(t)}
    \sum_{m\in\mathcal{M}_{\mathrm{target}}}
      \log u^\theta_t(m\mid\mathcal{G}_x, \mathcal{G}_t, t)
  \;\right]
  \label{eq:loss}
\end{equation}

Because a \textsc{FLOW} move acts on pairs of atoms, and the \textsc{ADD} move could act on virtual electron sites, calculating the rates of all possible moves would make the term $\Lambda_{\theta,t}$ scale as $O(|\mathcal{E}|^2)$. We therefore factorize the move's
rate for these two operations into a per-site \emph{source} rate and a \emph{sink} distribution over
destinations:

\begin{equation}
  u^\theta_t(m \mid \cdot) =
  \begin{cases}
    \lambda^{\mathrm{flow}}_{\theta,t}(e_{\mathrm{source}} \mid \cdot)\,
      p^{\mathrm{sink,flow}}_\theta(e_{\mathrm{sink}} \mid e_{\mathrm{source}}, \cdot)
      & m = \textsc{flow}(e_{\mathrm{source}}\!\to\!e_{\mathrm{sink}}), \\[0.4em]
    \lambda^{\mathrm{del}}_{\theta,t}(e_{\mathrm{source}} \mid \cdot)
      & m = \textsc{del}(e_{\mathrm{source}}), \\[0.4em]
    \lambda^{\mathrm{add}}_{\theta,t}(v \mid \cdot)\,
      p^{\mathrm{sink,add}}_\theta(e_{\mathrm{sink}} \mid v, \cdot)
      & m = \textsc{add}(e_{\mathrm{sink}}),
  \end{cases}
  \label{eq:rate-param}
\end{equation}

Here $\lambda^{\mathrm{flow}}_{\theta,t}$ and $\lambda^{\mathrm{del}}_{\theta,t}$
are per-site firing rates over non-virtual electron sites $e_{\mathrm{source}}\in\mathcal{E}$, $o^{e_{\mathrm{source}}}>0$;
$\lambda^{\mathrm{add}}_{\theta,t}$ is a per-atom rate over atoms
$v\in\mathcal{V}$, and $p^{\mathrm{sink,flow}}_\theta$, $p^{\mathrm{sink,add}}_\theta$
are separately parameterized distributions over destination sites. For
\textsc{del} no sink is needed, since the operation only removes an electron pair.

We parameterize \textsc{add} through an atom $v$ rather than a site because a new
electron pair is not associated with any existing site a priori. Intuitively,
the model first decides to donate a pair of electrons to atom $v$, and $v$ then
decides where to place them -- selecting a destination site $e_{\mathrm{sink}}$
among its associated sites through a dedicated sink head. In this sense
\textsc{add} behaves as a pseudo-\textsc{flow} whose source is the atom itself
rather than an occupied electron site, though the two use separate sink
parameterizations.
The total rate then factorizes as $\sum_m u^\theta_t(m\mid\cdot) = \sum_{e}(\lambda^{\mathrm{flow}} +
\lambda^{\mathrm{del}}) + \sum_a \lambda^{\mathrm{add}}$, since each
$p^{\mathrm{sink}}$ sums to one.

The training loop can be found in Algorithm~\ref{alg:train}.

\paragraph{Inference}
At inference, the products are generated by the marginal CTMC starting
from the reactants. We initialize $\mathcal{G}_t = \mathcal{G}_x$ at $t=0$ and
take $N$ Euler steps of size $\Delta t = 1/N$. Since the reactant is fixed, the
encoder is run once on $\mathcal{G}_x$ and its representations are reused at
every step, while the decoder predicts the move rates on the current graph
$\mathcal{G}_t$. At each step, every admissible move fires independently as a
thinned Poisson process: a move $m$ with predicted rate
$u^\theta_t(m\mid\mathcal{G}_x,\mathcal{G}_t,t)$ is applied with probability
\begin{equation}
  p_{\mathrm{fire}}(m)
  \;=\;
  1 - \exp\bigl(-\,u^\theta_t(m\mid\mathcal{G}_x,\mathcal{G}_t,t)\,\Delta t\bigr)
  \label{eq:fire}
\end{equation}
The sampled edit moves are applied to $\mathcal{G}_t$ via
$\mathrm{ApplyEdit}$. Iterating to $t=1$ yields a predicted product
$\hat{\mathcal{G}}_1$. The full procedure is given in Algorithms~\ref{alg:inference}.

\paragraph{Confidence and ranking through multiple trajectories}
A chemical reaction may proceed along several competing pathways, yielding a
mixture of products rather than a single deterministic outcome. Because firing
in \eqref{eq:fire} is stochastic, running multiple independent CTMC trajectories from
the same reactant $\mathcal{G}_x$ produces a distribution over products. We
canonicalize the resulting product graphs, group them by identity, and define
the \emph{confidence} of each unique product as its frequency among the
samples; ranking by descending confidence yields a top-$k$ list. 
This ranking plus the edits only acting on the electrons let MAELLE model a distribution of heavy atom-conservative products, capturing not only the main products as high-confidence predictions, but also plausible side products (Section~\ref{sec:side_products}).

\section{Results}

\subsection{Experiments}
\label{sec:experiments}
\paragraph{Datasets and splits.}
All experiments use the USPTO-480K reaction corpus of $\sim$480K patent
reactions~\citep{lowe2012extraction, Jin2017predicting}, evaluated under two
partitionings. The first is the corpus's standard train/val/test split, a common
benchmark for reaction
prediction~\citep{Jin2017predicting, schwaller2019molecular, sacha2021molecule, bi2021non, do2019graph}
(Section~\ref{sec:uspto_480K}). The standard split, however, measures only
in-distribution learning and, as recent work notes, misses the failure modes
that matter most in chemistry~\citep{gil2023holistic, bradshaw2025challenging, tran2026quantifying}.
We therefore construct a second partitioning of the same corpus into three test
sets probing different axes of generalization (Section~\ref{sec:ood}):
\textbf{(i)}~an in-distribution control (IID); \textbf{(ii)}~\emph{OOD Mass},
holding out high-molecular-weight products to probe structural complexity; and
\textbf{(iii)}~\emph{OOD Ester}, holding out non-methyl esterification reactions
to probe generalization to unseen reaction types. Split construction, sizes, and
preprocessing (atom-mapping with RXNMapper~\citep{schwaller2021rxnmapper} and
electron-move interpolation, Section~\ref{sec:ot}) are detailed in
Appendix~\ref{app:data}.

\paragraph{Baselines.}
We compare against representative methods spanning the three prediction
modalities, which lets us probe the inductive biases of each modality rather than
raw accuracy alone: SMILES generation -- Molecular Transformer (MT)~\citep{schwaller2019molecular}
and Graph2SMILES (G2S)~\citep{tu2022permutation}; graph edits --
MEGAN~\citep{sacha2021molecule} and GTPN~\citep{do2019graph}; and electron
redistribution -- NERF~\citep{bi2021non}. We exclude ELECTRO~\citep{bradshaw2018generative}
and FlowER~\citep{joung2025electron}, which by design cannot be trained on
USPTO-480K: ELECTRO applies only to reactions with linear-electron-flow topology,
and FlowER requires elementary-step annotations, unavailable for most USPTO-480K
reactions.

\paragraph{Metrics.}
We report top-$k$ exact-match accuracy, with predicted and reference products
canonicalized before comparison. Beyond exact match -- which penalizes
chemically plausible side products -- we additionally report a \emph{plausibility
rate} judged by a large language model (Section~\ref{sec:side_products};
protocol in Appendix~\ref{app:llm_judge}). The directional-flow evaluation of
Section~\ref{sec:directional} uses a separate reference dataset and its own
coverage metrics, which are discussed there.

\subsection{USPTO-480K benchmark}
\label{sec:uspto_480K}


\begin{table}[t]
\caption{Top-$k$ accuracy comparison on USPTO-480K. Entries taken from the original papers are marked with "$^\dagger$"}
\label{tab:results}
\centering
\small
\setlength{\tabcolsep}{5pt}
\begin{tabular}{@{}llcccc@{}}
\toprule
\textbf{Method} 
& \textbf{Prediction} 
& \textbf{Top-1} 
& \textbf{Top-3} 
& \textbf{Top-5} 
& \textbf{Top-10} \\
& \textbf{modality} & & & & \\
\midrule
Molecular Transformer$^\dagger$~\cite{schwaller2019molecular} 
& SMILES & 0.886 & 0.935 & 0.942 & --- \\
Graph2SMILES$^\dagger$~\cite{tu2022permutation} 
& SMILES & 0.903 & 0.940 & 0.948 & 0.953 \\
\midrule
MEGAN$^\dagger$ \cite{sacha2021molecule} 
& Graph edits & 0.863 & 0.924 & 0.940 & 0.954 \\
GTPN$^\dagger$ \cite{do2019graph}
& Graph edits & 0.832 & 0.860 & 0.865 & --- \\
\midrule
NERF$^\dagger$~\cite{bi2021non} 
& Electron edits & 0.907 & 0.933 & 0.937 & --- \\
\rowcolor{blue!6}
\textbf{MAELLE (ours)} 
& \textbf{Electron flow} & 0.872 & 0.930 & 0.939 & 0.946 \\
\bottomrule
\end{tabular}
\end{table}

Table~\ref{tab:results} compares MAELLE against representative baselines
spanning the three prediction modalities: SMILES generation (Molecular
Transformer~\citep{schwaller2019molecular}, Graph2SMILES~\citep{tu2022permutation}),
graph edits (MEGAN~\citep{sacha2021molecule}, GTPN~\citep{do2019graph}), and
electron redistribution (NERF~\citep{chen2025predicting}). On the standard
USPTO-480K split, MAELLE reaches a top-1 accuracy of 87.2\%, rising to 93.0\%,
93.9\%, and 94.6\% at top-3, top-5, and top-10 respectively. This places it on
par with the graph-edit models -- comparable to MEGAN (86.3\% top-1) and above
GTPN (83.2\%) -- and within 3.5 points of the strongest SMILES-based model,
Graph2SMILES (90.3\%), while trailing the electron-redistribution baseline NERF
(90.7\%) at top-1.
At top-5, MAELLE performs on par with NERF and MEGAN, and
lags only slightly behind the SMILES-based models.

The standard USPTO-480K split measures in-distribution learning, and by this
measure all models capture the training distribution well: at top-5, every
method except GTPN exceeds 93\% accuracy.
We therefore report USPTO-480K
performance to establish that MAELLE is able to learn the coupled distribution between
reactants and products, using the existing approaches as references.
In the next section, we turn to
out-of-distribution evaluation (Section~\ref{sec:ood}), where differences between
modalities are more pronounced.


\subsection{Out-of-distribution benchmark}
\label{sec:ood}

A major pitfall of reaction-prediction models is poor generalization outside their training domain~\cite{gil2023holistic,bradshaw2025challenging,tran2026quantifying}. We re-split USPTO-480K along two axes -- molecular complexity (\emph{OOD Mass}) and reaction novelty (\emph{OOD Ester}, with all non-methyl esterifications held out from training) -- alongside an in-distribution control (IID); see Appendix~\ref{app:ood}.

Top-k accuracies on these three test sets are shown for MAELLE, Graph2SMILES, MolecularTransformers, and MEGAN in Figure \ref{fig:ood}a.
As expected, all models perform on par with each other on the IID test, with top-10 accuracies of approximately 90\%, agreeing with the trend in the test set of the original USPTO-480K benchmark.
On the "OOD Mass", models that predict transformations - MAELLE and MEGAN - perform significantly better than the \textit{de-novo} generative models like Molecular Transformers and Graph2SMILES.
This is attributed to the former not having to reconstruct the products from scratch \cite{xuan2025tempre}.
Indeed, plotting the top-1 accuracy against the molecular weight reveals Molecular Transformers performance decays rapidly as the target becomes more complex, while Graph2SMILES shows a less severe drop. 
The better accuracy of Graph2SMILES compared with Molecular Transformers could be attributed to its graph encoder, which supports the permutation invariance.
On the other hand, MAELLE and MEGAN maintain a stable accuracy across the bins.

While MAELLE and MEGAN perform similarly on OOD Mass, MAELLE shows a significant advantage over MEGAN on the OOD Ester, being the most performant model in this domain.
This highlights the benefit of CTMC framework: Even though the model does not see the exact reaction type in the training data, it most likely has seen the intermediate graph states of reactions similar to esterification, such as amide coupling.

Two examples are shown in Figure \ref{fig:ood}b and \ref{fig:ood}c for the "OOD Mass" and "OOD Ester", respectively.
The first case features a complex macrocycle structure but a very simple amide coupling reaction.
Both MAELLE and MEGAN successfully recover the ground truth, while Graph2SMILES and Molecular Transformers struggle to reconstruct the molecular structure in SMILES format.
The second case involves an esterification reaction with an anhydride.
Out of the four models, only MAELLE correctly predicts the ester as the product

\begin{figure}[t]
    \centering
    \includegraphics[width=0.83\linewidth]{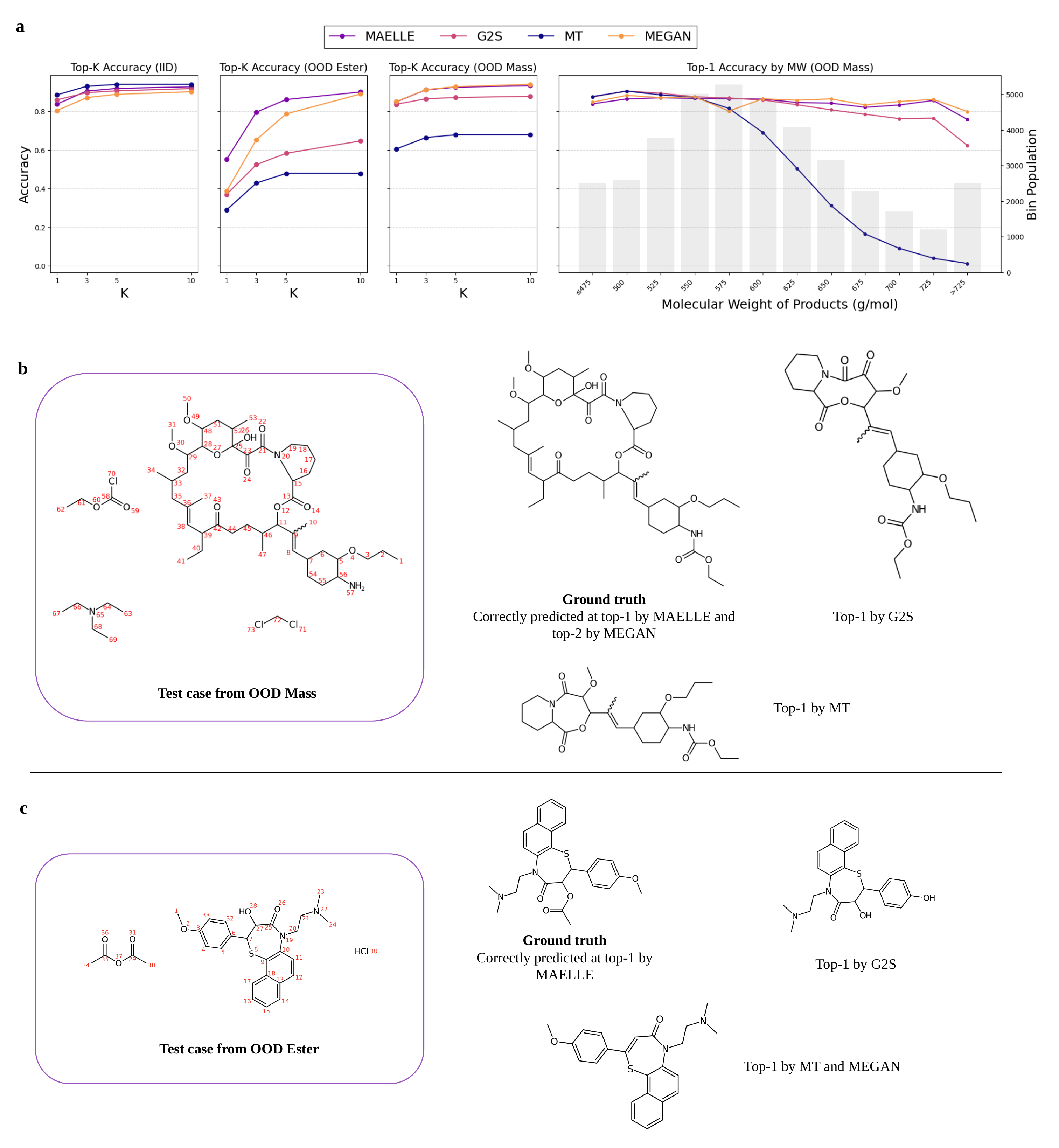}
    \caption{Comparative performance of MAELLE, Graph2SMILES, Molecular Transformers, and MEGAN on our OOD benchmark. \textbf{a. }From left to right, the first three plots show top-k accuracy on in-distribution test set (IID), esterification-split test set (OOD Ester), and molecular-weight-split test set (OOD Mass). The last plot describes the top-1 accuracy divided into weight bins. \textbf{b. } An illustrative example from the OOD Mass split. \textbf{c. }An illustrative example from the OOD Ester split.}
    \label{fig:ood}
\end{figure}

\subsection{Side product prediction with confidence}
\label{sec:side_products}
By modeling reaction as a generative problem, simulating multiple trajectories, and taking the most popular outputs, we are somewhat mimicking how a real chemistry process happens, where multiple pathways compete with others.
To characterize this behavior, we sampled 5000 cases from the test set of USPTO-480K where MAELLE has at least 2 high-confidence predictions with confidence no less than 7.8\%, corresponding with 5 out of 64 samples.
These examples are then matched with the predictions of Molecular Transformers and Graph2SMILES. 
As sequence-based methods, these model relies on beam-search for sampling different outcomes in the token space, in contrast with MAELLE which samples in a more chemistry-friendly electron space.

As recent LLMs have shown remarkable symbolic understanding and chemistry knowledge \cite{bran2025chemical,voinarovska2026humans}, we opted to use LLM-as-a-judge to address the plausibility of the proposed outcomes. 
Due to the inference nature, the number of unique outcomes from MAELLE is much less than what comes out of Molecular Transformers or Graph2SMILES and thus we capped the number of predictions for test case to be the smallest number of predictions among the models.
Given a precursor set and a list of products, the 
LLM has to give a verdict of whether each of the products is plausible or not, without knowing which prediction comes from which model.
As a sanity check, we also include the ground truth product into this list and measure how much proportion of the ground truth is deemed as plausible by the LLM (see Appendix \ref{app:llm_judge} for more details). 
The plausibility rate is then calculated as the number of predictions classified as plausible by the LLMs divided by the total number of predictions according to top-k.

Figure \ref{fig:side_prod}a displays side-by-side top-k accuracy and top-k plausibility rate.
Agreeing with the results in Table \ref{tab:results}, top-k accuracy is ranked in the order of Graph2SMILES > Molecular Transformers > MAELLE.
However, top-1 plausibility rate of MAELLE (80.7\%) is significant higher than its top-1 accuracy (60.3\%), suggesting that incorrect predictions are not always implausible.
Furthermore, we observe a reversed order when it comes to top-2 and top-3 plausibility rates, with MAELLE being the highest, followed by Molecular Transformers and Graph2SMILES.
This hints at the fact that SMILES-based models are likely trained to overfit to one single outcome, which conflicts with the stochastic nature of chemical reactions.

For demonstration, Figure \ref{fig:side_prod}b shows an example where an incorrect prediction by MAELLE is still considered plausible by the LLM-judge.
In the test dataset, this reaction is a bromide substitution reaction involving an imidazole ring with two nucleophilic nitrogen N:10 and N:12.
The ground truth product features an attack of N:12 to C:21 as the main product, likely attributed to the fact that N:12 is less sterically hindered by the benzoyl group compared with N:10.
All three models manage to predict the correct ground truth as top-1.
However, intuitively speaking, the side product involved N:10 is not entirely impossible, and MAELLE was able to recover this minor outcome as its top-2 prediction.
On the other hand, due to the fact that Graph2SMILES and Molecular Transformers sample in the token space, top-2 predictions of these models are just hallucinated versions of their top-1 prediction.
This example is also aligned with the observation that AI systems have limited capability in writing SMILES and other symbolic sequences, which is one of the major bottlenecks in AI applications for chemistry \cite{edwards2022translation,jang2024can}.

Figure \ref{fig:side_prod}c showcases how the step-by-step CTMC of electron edits enables MAELLE to sample multiple outcomes that agree with chemical mechanisms.
The confidence scores assigned to each outcome successfully differentiate the major and minor products, agreeing with the ground truth.

\begin{figure}[t]
    \centering
    \includegraphics[width=\linewidth]{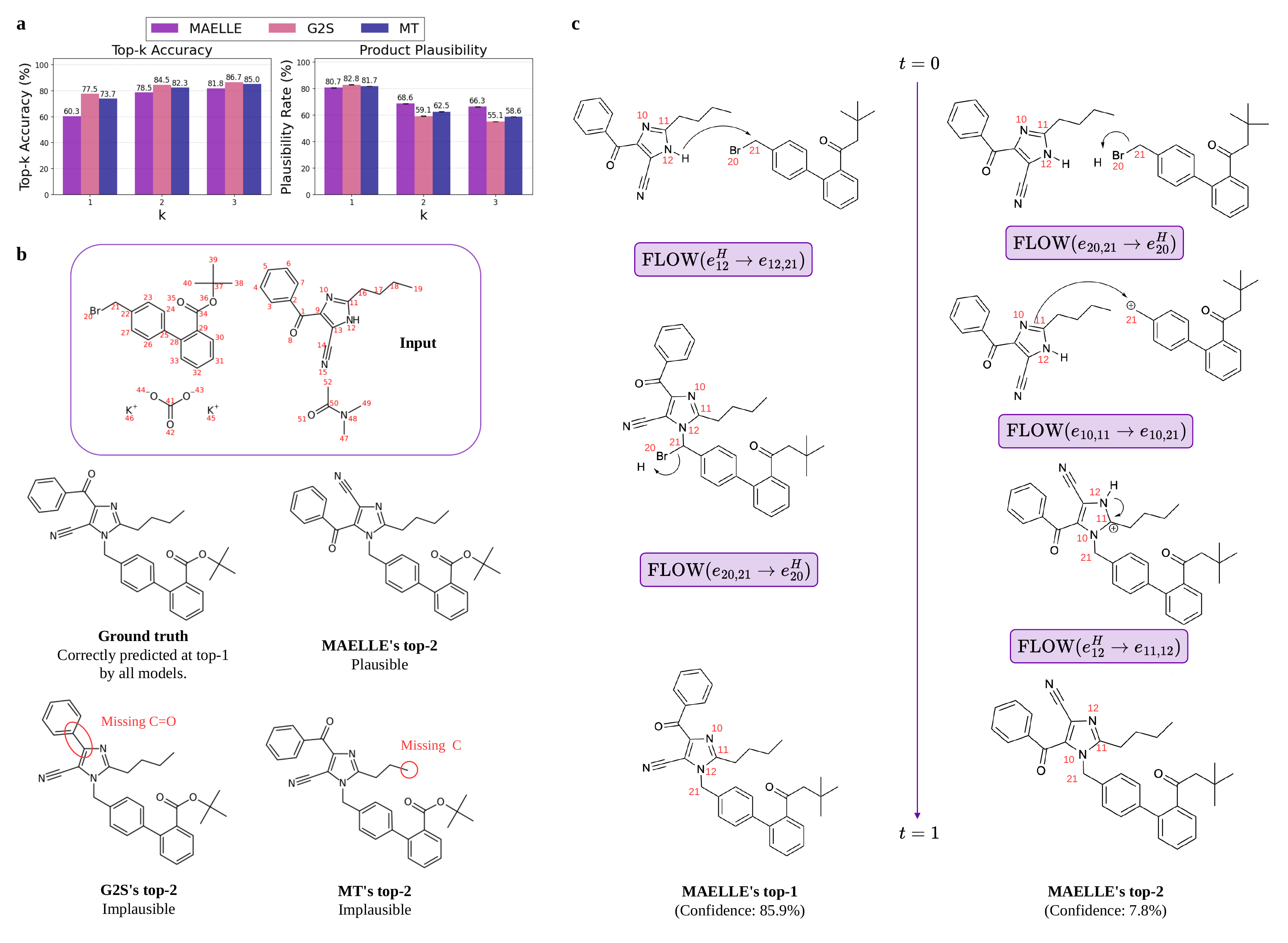}
    \caption{\textbf{a. }Top-k accuracy and plausibility rate, with the latter having mean and standard deviation reported for 3 runs of LLM evaluation. The LLM-judge has a ground truth coverage of $81.1 \pm 0.1\%$.\textbf{b. } Test case 39591 from the test set of USPTO-480K serves as an illustrative example where MAELLE successfully predicts the main product as top-1 and the side product as top-2, while Graph2SMILES and MolecularTransformers propose hallucinated outcomes. \textbf{c. }Two example trajectories that MAELLE samples leading to the two different outcomes.}
    \label{fig:side_prod}
\end{figure}


\subsection{Recovery of net directional electron flows}
\label{sec:directional}

\begin{table}[h]
\centering
\small
\caption{MAELLE's coverage of bond-forming elementary steps categorized into \textit{persistent} and \textit{transient} bonds.}
\label{tab:bond_recovery}
\begin{tabular}{@{}lllll@{}}
\toprule
\multirow{2}{*}{\textbf{\begin{tabular}[c]{@{}l@{}}Bond formation \\ type\end{tabular}}} & \multirow{2}{*}{\textbf{\begin{tabular}[c]{@{}l@{}}\# ground-truth \\ elementary steps\end{tabular}}} & \multicolumn{3}{c}{\textbf{Recovered by MAELLE's trajectories}} \\ \cmidrule(l){3-5} 
                                                                                         &                                                                                                       & \textbf{Correct}     & \textbf{Top 1}      & \textbf{All}       \\ \midrule
Persistent                                                                               & 17,925                                                                                                & 13,137 (73.3\%)      & 14,236 (79.4\%)     & 17,847 (99.6\%)    \\
Transient                                                                                & 19,658                                                                                                & 3,042 (15.5\%)       & 747 (3.8\%)         & 5,814 (29.6\%)     \\
Total                                                                                    & 37,583                                                                                                & 16,179 (43.0\%)      & 14,983 (39.9\%)     & 23,661 (63.0\%)    \\ \bottomrule
\end{tabular}
\end{table}
\begin{table}[t]
\centering
\small
\caption{MAELLE's coverage of bond-forming elementary steps and their directional agreement with FlowER's reference. For a MAELLE-predicted step and a ground-truth step that form the same bond, their directions agree if their sources and sinks are identical.}
\label{tab:direction}
\begin{tabular}{@{}lll@{}}
\toprule
\textbf{\begin{tabular}[c]{@{}l@{}}Sampling \\ mode\end{tabular}} & \textbf{\begin{tabular}[c]{@{}l@{}}Elementary steps \\ coverage\end{tabular}} & \textbf{\begin{tabular}[c]{@{}l@{}}Directional agreement \\ of elementary steps\end{tabular}} \\ \midrule
Correct                                                           & 16,179/37,583 (43.0\%)                                                        & 13,694/16,179 (84.6\%)                                                                        \\
Top-1                                                             & 14,983/37,583 (39.9\%)                                                        & 13,342/14,983 (89.0\%)                                                                        \\
All                                                               & 23,661/37,583 (63.0\%)                                                        & 18,505/23,661 (78.2\%)                                                                        \\ \bottomrule
\end{tabular}
\end{table}

\begin{figure}
    \centering
    \includegraphics[width=0.75\linewidth]{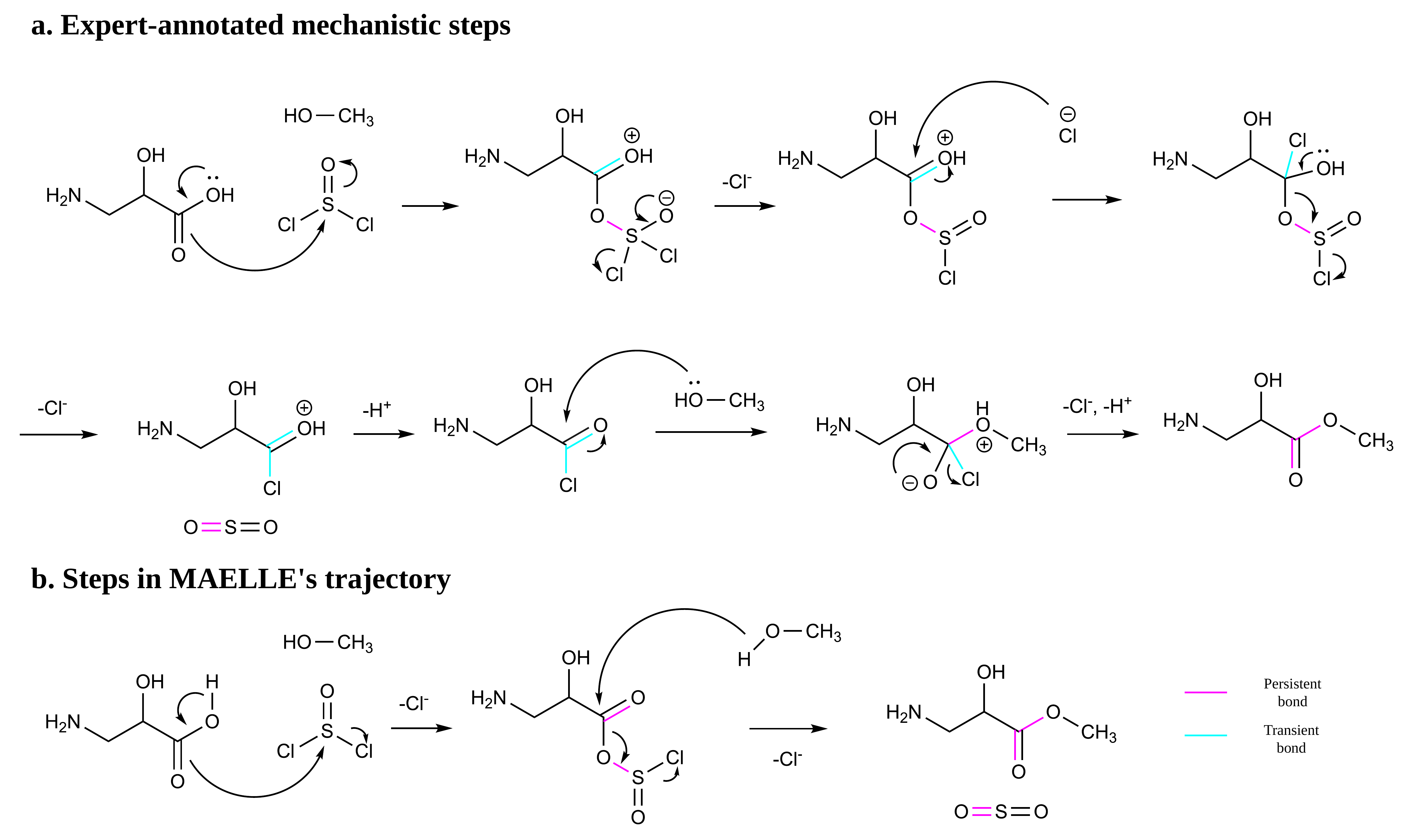}
    \caption{\textbf{a.} An example from FlowER's test set with expert-labeled elementary steps. \textbf{b.} Pseudo mechanistic steps in MAELLE's trajectory. Note that during training, MAELLE does not consume the labeled
    elementary steps.}
    \label{fig:flower}
\end{figure}

Beyond predicting the product, a further advantage of electron-based
trajectories is that they reveal \emph{how} the electrons are redistributed.
NERF also models reactions as electron redistributions, but because it predicts
only the net change at each bond, it cannot say where the electrons come from or
go. For MAELLE this information is explicit: the FLOW operation names a source
and a sink, so each trajectory carries a \emph{directional electron flow} that
resembles an arrow-pushing mechanism.


We train MAELLE on
the FlowER dataset~\citep{joung2025electron}, which contains $\sim$290K reactions extracted from
USPTO-Full~\citep{lowe2012extraction}, each annotated with a sequence of
expert-labeled elementary steps ($\sim$2M steps in total). MAELLE is trained
end-to-end and never consumes these labels. 
We draw $S=64$ trajectories per reaction and,
for each ground-truth bond-forming elementary step, check whether some
MAELLE trajectory forms the same bond and, if so, whether the source and sink of
its FLOW agree with the reference.


To support our claim that MAELLE can recover the \textit{directional electron flow}, i.e. bonds created as a result of the \textsc{FLOW} operation, we collect all bond-forming elementary steps in the test set of FlowER, and check how many of them are recovered from MAELLE's predictions, and check if the directions of the electron flow are correct, that is, when the source and sink are similar to the reference.

FlowER's test set contains $37{,}583$ bond-forming steps. We consider three ways
of selecting among the 64 trajectories: \textsc{correct} keeps only trajectories
that reach the ground-truth product; \textsc{top-1} takes the trajectories leading to the
highest-confidence product regardless of correctness;  and \textsc{all} pools all 64.
These recover $43.0\%$, $39.9\%$, and $63.0\%$ of the annotated steps,
respectively (Table~\ref{tab:bond_recovery}). The gap from full coverage is expected: expert pathways contain many \emph{transient} bonds
-- formed as short-lived intermediates and broken again before the product --
whereas MAELLE is trained on moves interpolated by optimal transport with a
topological cost (Sections~\ref{sec:ot}, \ref{sec:dfm}), which favors the
\emph{persistent} bonds that survive into the product.
Indeed, sampling all trajectories recovers $99.6\%$ of persistent bonds but only $29.6\%$
of transient ones (Table~\ref{tab:bond_recovery}).
Additionally, the coverage when we sample all trajectories is higher compared with sampling those of just the correct products or top-1, meaning that stochastic sampling surfaces additional annotated bonds, which contributes to the multi-outcome behavior in Section \ref{sec:side_products}.

Among the annotated steps that MAELLE recovers, we also observe high agreement in terms of the direction of the electron moves (Table~\ref{tab:direction}).
For \textsc{correct}, \textsc{top-1}, and \textsc{all} sampling schemes, the directional agreement rates are 84.6\%, 89.0\%, and 78.2\%, respectively, showing that the electron moves produced by MAELLE, which is an artifact of the edit-based mixture path (Section~\ref{sec:dfm}), serve as a good approximation of the labeled steps.

Figure~\ref{fig:flower} shows a representative case from FlowER's test set: the
esterification of a carboxylic acid activated by thionyl chloride
($\mathrm{SOCl_2}$).
The expert pathway proceeds through a chlorosulfite intermediate -- the
carboxylic oxygen attacks sulfur, chloride leaves then attacks the carboxylic carbon, displacing the chlorosulfite ester. The chloride itself ultimately becomes the leaving group for the substitution of methanol.
Throughout the trajectory, several transient bonds are formed, including C-Cl and one electron pair of the C=O bond.
On the other hand, MAELLE's trajectory recovers the persistent bond formations that define the
product -- the new acyl C-O bond to the incoming alkoxy group, the change from C-O single bond to C=O double bond, and the formation of the S=O double bond.
The predicted flows are directionally consistent with the
reference on the persistent steps, illustrating both the coverage and the
directional-agreement statistics above.

Together these results support our claim that MAELLE's trajectories are
\emph{pseudo-mechanistic}: without any elementary-step supervision, they recover
the persistent, product-defining electron flows and their direction, serving as
an approximation of expert-annotated mechanisms.

\section{Conclusion}

We introduced MAELLE, a reaction prediction model that operates in the electron occupation space via discrete flow matching. By formulating reactions as CTMCs over an auxiliary move-state space derived from optimal transport, MAELLE bridges the gap between machine learning and the arrow-pushing formalism that chemists use to reason about reactivity. Our experiments demonstrate three key findings: (i) competitive in-distribution performance on USPTO-480K despite operating in a more constrained representation than SMILES-based methods; (ii) strong out-of-distribution robustness on both molecular-weight and reaction-type generalization, where MAELLE outperforms all baselines; and (iii) the ability to predict chemically plausible side products through multiple CTMC trajectory sampling, a capability inaccessible to deterministic or beam-search-based approaches.

\paragraph{Limitations and future work.}
MAELLE currently models electron pairs rather than individual electrons, which precludes radical reactions involving unpaired electrons; extending the occupation space to single-electron resolution would address this. The framework predicts forward reactions only; adapting the edit operations to the atom level (e.g., atom insertion and deletion) would enable retrosynthetic prediction. While the OT-derived move trajectories are mechanistically interpretable and correlate with known chemistry, they represent pseudo-mechanisms rather than energy-optimal elementary steps -- the model learns the most likely \emph{combinatorial} path between reactant and product occupations, not the true potential energy surface. Incorporating energetic priors or non-optimal-transport pathways could improve mechanistic fidelity. We note that reagents and catalysts are already incorporated as context through the reactant graph $\mathcal{G}_0$; the model conditions on them even though they do not directly participate in the electron flow.


\bibliography{citation}


\appendix

\section{Equivalence of the edit-based and augmented-space mixture paths}
\label{app:equiv}

We show, in a setting-agnostic form, that the augmented-space mixture path of
Edit Flows~\citep{havasi2025edit} is equivalent to directly sampling a
subset of edit moves and applying them. 
The statement is
independent of what the edits are: it applies verbatim to sequence edits
(insertion, deletion, substitution) and to the electron-occupation edits
(\textsc{Flow}, \textsc{Del}, \textsc{Add}) used in the main text.

\paragraph{Setup.}
Let $\mathcal{D}$ be a state space (e.g.\ token sequences, or electron
occupation vectors) and fix a source state $x_0\in\mathcal{D}$. Let
$\mathcal{M}=\{m_1,\dots,m_K\}$ be a finite set of edit moves, each an operation
on $\mathcal{D}$, and let
\begin{equation}
  \mathrm{Apply}:\mathcal{D}\times 2^{\mathcal{M}}\to\mathcal{D}
\end{equation}
apply a subset of moves to a state, assumed well defined (independent of the
order in which the moves in the subset are applied). Write $x_1 =
\mathrm{Apply}(x_0,\mathcal{M})$ for the state obtained by applying all moves.
Let $\kappa:[0,1]\to[0,1]$ be a schedule with $\kappa_0=0$ and $\kappa_1=1$.

We compare two ways of building a conditional path
$p_t(\,\cdot\mid x_0,x_1)$ on $\mathcal{D}$ that interpolates from $x_0$ at
$t=0$ to $x_1$ at $t=1$.

\paragraph{(A) Augmented-space path.}
Introduce the auxiliary space $\mathcal{Z}=\{0,1\}^{K}$, one binary coordinate
per move, with the readout
\begin{equation}
  g:\mathcal{Z}\to\mathcal{D},
  \qquad
  g(z)=\mathrm{Apply}\bigl(x_0,\{m_k : z_k=1\}\bigr).
  \label{eq:app-readout}
\end{equation}
Treating each coordinate as an independent binary token that interpolates
$z_k:0\!\to\!1$~\citep{campbell2024generative,gat2024discrete} gives the
factorized path on $\mathcal{Z}$
\begin{equation}
  p_t(z\mid x_0,x_1)=\prod_{k=1}^{K}
  \Bigl[(1-\kappa_t)\,\delta_0(z_k)+\kappa_t\,\delta_1(z_k)\Bigr],
  \label{eq:app-auxpath}
\end{equation}
and the augmented-space path on $\mathcal{D}$ is its pushforward through $g$,
\begin{equation}
  p^{\mathrm{A}}_t(x\mid x_0,x_1)
  =\sum_{z\in\mathcal{Z}} p_t(z\mid x_0,x_1)\,\mathbbm{1}[g(z)=x].
  \label{eq:app-A}
\end{equation}
For sequence edits with $\mathcal{Z}$ realized as padded sequences and $g$ as
padding removal, \eqref{eq:app-A} is the mixture path of
\citet{havasi2025edit}.

\paragraph{(B) Edit-based path.}
Include each move independently with probability $\kappa_t$, collect the applied
subset $\mathcal{S}$, and apply it to $x_0$. The resulting law is
\begin{equation}
  p^{\mathrm{B}}_t(x\mid x_0,x_1)
  =\sum_{\mathcal{S}\subseteq\mathcal{M}}
  \kappa_t^{\,|\mathcal{S}|}(1-\kappa_t)^{\,K-|\mathcal{S}|}\,
  \mathbbm{1}[\mathrm{Apply}(x_0,\mathcal{S})=x].
  \label{eq:app-B}
\end{equation}

\begin{proposition}[Edit-sampling form of the augmented-space path]
\label{prop:equiv}
For every $t\in[0,1]$ and every $x\in\mathcal{D}$,
\begin{equation}
  p^{\mathrm{A}}_t(x\mid x_0,x_1)=p^{\mathrm{B}}_t(x\mid x_0,x_1).
\end{equation}
Consequently sampling $z\sim p_t(\cdot\mid x_0,x_1)$ from \eqref{eq:app-auxpath}
and returning $g(z)$ has the same law as sampling a subset $\mathcal{S}$ of
moves by independent $\mathrm{Bernoulli}(\kappa_t)$ inclusion and returning
$\mathrm{Apply}(x_0,\mathcal{S})$. In particular $p^{\mathrm{A}}_0=p^{\mathrm{B}}_0=\delta_{x_0}$
and $p^{\mathrm{A}}_1=p^{\mathrm{B}}_1=\delta_{x_1}$, so both are valid
conditional interpolants.
\end{proposition}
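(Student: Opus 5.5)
The plan is to exhibit an explicit bijection between the auxiliary space $\mathcal{Z}=\{0,1\}^K$ and the power set $2^{\mathcal{M}}$, and check that it carries both the weights and the readouts in \eqref{eq:app-A} onto those in \eqref{eq:app-B}. Concretely, I would define $\phi:\mathcal{Z}\to 2^{\mathcal{M}}$ by $\phi(z)=\{m_k : z_k=1\}$. Its inverse sends $\mathcal{S}$ to the indicator vector $z_k=\mathbbm{1}[m_k\in\mathcal{S}]$. Since the moves $m_1,\dots,m_K$ are indexed distinctly, this is a bijection; multiplicities in $\mathcal{M}$, such as repeated \textsc{Flow} moves from $T^\ast_{e,e'}>1$, are handled by treating $\mathcal{M}$ as an indexed family, and I would note this explicitly.

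Next I would verify the two compatibilities. First, for the weights, expand the product in \eqref{eq:app-auxpath} at a fixed binary $z$. Each factor equals $\kappa_t$ if $z_k=1$ and $1-\kappa_t$ if $z_k=0$, so $p_t(z\mid x_0,x_1)=\kappa_t^{|\phi(z)|}(1-\kappa_t)^{K-|\phi(z)|}$, using $|\phi(z)|=\sum_k z_k$. Second, for the readouts, $g(z)=\mathrm{Apply}(x_0,\phi(z))$ holds by definition \eqref{eq:app-readout}. This step uses the standing assumption that $\mathrm{Apply}$ is order-independent, so that applying a subset is well defined. Substituting both identities into \eqref{eq:app-A} and reindexing the finite sum over $z$ by $\mathcal{S}=\phi(z)$ gives exactly \eqref{eq:app-B}, for every $t$ and $x$.

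The sampling statement follows because independent $\mathrm{Bernoulli}(\kappa_t)$ coordinates $z_k$ produce $\phi(z)$ with independent Bernoulli inclusion of each move. Hence $g(z)$ and $\mathrm{Apply}(x_0,\mathcal{S})$ are the same random variable under this coupling, and in particular have the same law. For the endpoints, at $t=0$ all the mass of the product sits on $z=0$, i.e.\ on $\mathcal{S}=\varnothing$, and $\mathrm{Apply}(x_0,\varnothing)=x_0$. At $t=1$ it sits on $z=\mathbf{1}$, i.e.\ on $\mathcal{S}=\mathcal{M}$, giving $x_1$ by definition. Here I would adopt the convention $0^0=1$ in the weights.

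I expect no real obstacle. The only point needing care is making the bijection honest when $\mathcal{M}$ contains repeated identical moves. I would treat $\mathcal{M}$ as a labeled family, so that $2^{\mathcal{M}}$ means subsets of labels. Otherwise $|\mathcal{S}|$ and the count $K$ in \eqref{eq:app-B} would be inconsistent.
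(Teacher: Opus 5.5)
Your proposal is correct and follows the paper's proof essentially step for step. Both use the bijection $z\mapsto\{m_k : z_k=1\}$ between $\{0,1\}^K$ and $2^{\mathcal{M}}$, rewrite the product weight as $\kappa_t^{|\mathcal{S}|}(1-\kappa_t)^{K-|\mathcal{S}|}$, use $g(z)=\mathrm{Apply}(x_0,\mathcal{S}(z))$, reindex the sum, and read off the endpoints; your remarks on treating $\mathcal{M}$ as an indexed family (needed when $T^\ast_{e,e'}>1$ yields repeated moves) and on the $0^0=1$ convention are sensible refinements the paper leaves implicit.
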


\begin{proof}
The map $z\mapsto \mathcal{S}(z):=\{m_k : z_k=1\}$ is a bijection between
$\mathcal{Z}=\{0,1\}^K$ and the power set $2^{\mathcal{M}}$, with inverse
$\mathcal{S}\mapsto z$ where $z_k=\mathbbm{1}[m_k\in\mathcal{S}]$. We rewrite the
augmented-space weight \eqref{eq:app-auxpath} under this bijection.

Each factor in \eqref{eq:app-auxpath} selects one of its two terms according to
$z_k$: it contributes $\kappa_t$ when $z_k=1$ and $1-\kappa_t$ when $z_k=0$.
Hence for a fixed $z$,
\begin{equation}
  p_t(z\mid x_0,x_1)
  =\prod_{k=1}^{K}\kappa_t^{\,z_k}(1-\kappa_t)^{\,1-z_k}
  =\kappa_t^{\sum_k z_k}\,(1-\kappa_t)^{\,K-\sum_k z_k}
  =\kappa_t^{\,|\mathcal{S}(z)|}(1-\kappa_t)^{\,K-|\mathcal{S}(z)|},
  \label{eq:app-weight}
\end{equation}
using $\sum_k z_k=|\mathcal{S}(z)|$. By definition of the readout
\eqref{eq:app-readout}, $g(z)=\mathrm{Apply}(x_0,\mathcal{S}(z))$. Substituting
both identities into \eqref{eq:app-A} and reindexing the sum from $z$ to
$\mathcal{S}=\mathcal{S}(z)$,
\begin{align}
  p^{\mathrm{A}}_t(x\mid x_0,x_1)
  &=\sum_{z\in\mathcal{Z}}
    \kappa_t^{\,|\mathcal{S}(z)|}(1-\kappa_t)^{\,K-|\mathcal{S}(z)|}\,
    \mathbbm{1}[\mathrm{Apply}(x_0,\mathcal{S}(z))=x] \notag\\
  &=\sum_{\mathcal{S}\subseteq\mathcal{M}}
    \kappa_t^{\,|\mathcal{S}|}(1-\kappa_t)^{\,K-|\mathcal{S}|}\,
    \mathbbm{1}[\mathrm{Apply}(x_0,\mathcal{S})=x]
  = p^{\mathrm{B}}_t(x\mid x_0,x_1).
\end{align}
This proves the equality for all $t$ and $x$.

For the boundary conditions, at $t=0$ we have $\kappa_0=0$, so the weight
$\kappa_0^{|\mathcal{S}|}(1-\kappa_0)^{K-|\mathcal{S}|}$ is nonzero only for
$\mathcal{S}=\varnothing$, where it equals $1$; since
$\mathrm{Apply}(x_0,\varnothing)=x_0$, we get $p_0=\delta_{x_0}$. At $t=1$,
$\kappa_1=1$, so the weight is nonzero only for $\mathcal{S}=\mathcal{M}$, where
it equals $1$; since $\mathrm{Apply}(x_0,\mathcal{M})=x_1$, we get
$p_1=\delta_{x_1}$.
\end{proof}


\begin{corollary}[Sequence edits]
\label{cor:text}
Taking $\mathcal{D}$ to be token sequences and $\mathcal{M}$ a set of insertion,
deletion, and substitution moves aligning $x_0$ to $x_1$, Proposition~\ref{prop:equiv}
recovers the augmented-space mixture path of Edit
Flows~\citep{havasi2025edit}, with $\mathcal{Z}$ realized as padded
sequences and $g=f_\mathrm{rm-blanks}$ that strips all padding tokens.
\end{corollary}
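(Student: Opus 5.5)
The plan is to reduce the corollary to Proposition~\ref{prop:equiv}. I will build, from the Edit Flows alignment, an explicit move set $\mathcal{M}$ and an $\mathrm{Apply}$ map. The Edit Flows auxiliary path \eqref{eq:editflow-aug} will then coincide with the pushforward \eqref{eq:app-A} through a binary readout $g$, and that readout will equal $f_{\mathrm{rm-blanks}}$ composed with a reconstruction of padded sequences.

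First I fix the alignment. Let $z_0,z_1\in\mathcal{Z}=(\mathcal{T}\cup\{\epsilon\})^N$ satisfy $f_{\mathrm{rm-blanks}}(z_j)=x_j$. Let $I=\{i : z_0^i\neq z_1^i\}$ be the set of disagreeing positions, and write $K=|I|$. Each $i\in I$ defines exactly one move $m_i$, classified as follows:
\begin{itemize}
\item an \emph{insertion} of $z_1^i$ if $z_0^i=\epsilon$;
\item a \emph{deletion} if $z_1^i=\epsilon$;
\item a \emph{substitution} $z_0^i\to z_1^i$ if both are tokens.
\end{itemize}
Positions with $z_0^i=z_1^i$ generate no move.

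For a subset $\mathcal{S}\subseteq\mathcal{M}$ I define the padded sequence $z_{\mathcal{S}}$ position by position: $z_{\mathcal{S}}^i=z_1^i$ if $m_i\in\mathcal{S}$, and $z_{\mathcal{S}}^i=z_0^i$ otherwise. I then set $\mathrm{Apply}(x_0,\mathcal{S})=f_{\mathrm{rm-blanks}}(z_{\mathcal{S}})$.

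The main obstacle is well-definedness. The proposition requires $\mathrm{Apply}$ to be a genuine order-independent operation on the state $x_0$. Under sequential application, however, an insertion's location in the unpadded sequence shifts as other insertions and deletions are applied, so no fixed index in $x_0$ describes it. I will handle this by anchoring every move to its aligned position $i$ rather than to an index in $x_0$. Because distinct moves touch distinct coordinates of $z$, the sequence $z_{\mathcal{S}}$ depends only on the set $\mathcal{S}$, so $\mathrm{Apply}$ is order-independent. It also satisfies $\mathrm{Apply}(x_0,\varnothing)=f_{\mathrm{rm-blanks}}(z_0)=x_0$ and $\mathrm{Apply}(x_0,\mathcal{M})=f_{\mathrm{rm-blanks}}(z_1)=x_1$. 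The sequential reading, in which each move is applied to the current unpadded sequence, agrees with this definition by induction on $|\mathcal{S}|$: applying one more aligned move and then stripping blanks gives the same result as stripping blanks and then editing at the correspondingly shifted index.

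Next I identify the two paths. The token-wise path \eqref{eq:tokenwise} on $\mathcal{Z}$ is deterministic at coordinates $i\notin I$, where $z^i=z_0^i=z_1^i$, and independently $\mathrm{Bernoulli}(\kappa_t)$ between $z_0^i$ and $z_1^i$ at coordinates $i\in I$. So its law is the pushforward of \eqref{eq:app-auxpath} on $\{0,1\}^K$ under the map $b\mapsto z$ given by $z^i=z_1^i$ if $b_i=1$ and $z^i=z_0^i$ otherwise. This map is a bijection onto the support of the path, and it sends $b$ to $z_{\mathcal{S}(b)}$.

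Composing with $f_{\mathrm{rm-blanks}}$ then gives exactly the readout $g$ of \eqref{eq:app-readout}. Therefore the $x$-marginal of \eqref{eq:editflow-aug} equals $p^{\mathrm{A}}_t$. Proposition~\ref{prop:equiv} now yields $p^{\mathrm{A}}_t=p^{\mathrm{B}}_t$, together with the boundary conditions $p_0=\delta_{x_0}$ and $p_1=\delta_{x_1}$. This recovers the Edit Flows mixture path as independent sampling of insertion, deletion and substitution moves.
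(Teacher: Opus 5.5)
Your proposal is correct and takes the paper's route. The paper gives no separate proof: it only remarks, in the setup of Proposition~\ref{prop:equiv}, that with padded sequences and padding removal the pushforward \eqref{eq:app-A} is the Edit Flows path, and you instantiate the proposition the same way. Your version is more careful than the paper's, because you index moves by the disagreeing aligned positions, note that coordinates with $z_0^i=z_1^i$ stay frozen, make explicit that the readout on $\{0,1\}^K$ is $f_{\mathrm{rm-blanks}}$ composed with $b\mapsto z_{\mathcal{S}(b)}$ rather than $f_{\mathrm{rm-blanks}}$ itself, and justify the order-independence hypothesis that the paper leaves implicit.
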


\begin{corollary}[Electron edits]
\label{cor:electron}
Taking $\mathcal{D}=\mathbb{Z}^{|\mathcal{E}|}_{\ge 0}$, $x_0=o_x$,
$\mathcal{M}$ the move set of Eq.~\eqref{eq:moveset}, and
$\mathrm{Apply}=\mathrm{ApplyEdit}$, Proposition~\ref{prop:equiv} shows that the
edit-based mixture path \eqref{eq:edit-mixture} equals the augmented-space
pushforward, justifying the interpolant-sampling scheme of
Algorithm~\ref{alg:sample}.
\end{corollary}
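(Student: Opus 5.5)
The plan is to prove Corollary~\ref{cor:electron} by checking that the electron-edit setting satisfies every hypothesis in the setup of Proposition~\ref{prop:equiv}, and then reading off the conclusion. The proposition is stated for an arbitrary state space $\mathcal{D}$, a fixed source state, a finite move set $\mathcal{M}$, an order-independent map $\mathrm{Apply}$, and a schedule with $\kappa_0=0$, $\kappa_1=1$. Nothing electron-specific enters its proof. So the work reduces to instantiating these objects with $\mathcal{D}=\mathbb{Z}^{|\mathcal{E}|}_{\ge 0}$, $x_0=o_x$, $\mathcal{M}$ from Eq.~\eqref{eq:moveset}, and $\mathrm{Apply}=\mathrm{ApplyEdit}$.

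\textbf{Finiteness and multiplicity.} $\mathcal{M}$ is finite because $K=\sum_{e\neq e'}T^*_{e,e'}$ is a finite sum of nonnegative integers. When $T^*_{e,e'}>1$, I will treat the $T^*_{e,e'}$ copies of the same move as distinct, labelled elements $m_k$. This way $\mathcal{M}$ is a set of $K$ labelled moves, and the bijection $\{0,1\}^K\leftrightarrow 2^{\mathcal{M}}$ used in the proof of Proposition~\ref{prop:equiv} is valid.

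\textbf{Order independence.} Each move acts on $o$ by adding a fixed integer vector: $-\mathbf{1}_{e}+\mathbf{1}_{e'}$ for \textsc{flow}, $-\mathbf{1}_e$ for \textsc{del}, and $+\mathbf{1}_{e'}$ for \textsc{add}, as in Eqs.~\eqref{eq:flow}--\eqref{eq:add}. Write $\Delta_m$ for the vector of move $m$. Define
\[
\mathrm{ApplyEdit}(o_x,\mathcal{S})=o_x+\sum_{m\in\mathcal{S}}\Delta_m .
\]
This is manifestly independent of the order of application, since vector addition commutes. The consistency condition $\mathrm{ApplyEdit}(o_x,\mathcal{M})=o_y$ then follows from the OT marginal constraints: row sums of $T^*$ equal $\bar o_x$ and column sums equal $\bar o_y$. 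Summing $\Delta_m$ over all off-diagonal entries gives $\bar o_y-\bar o_x$, restricted to $\mathcal{E}$, because the diagonal entries cancel.

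\textbf{Main obstacle: nonnegativity.} The step I expect to be hardest is the codomain, namely that $\mathrm{ApplyEdit}(o_x,\mathcal{S})\in\mathbb{Z}^{|\mathcal{E}|}_{\ge 0}$ for every subset $\mathcal{S}$, not just for $\mathcal{S}=\mathcal{M}$. I would argue coordinatewise. At site $e$, the moves that subtract from $e$ number $\sum_{e'\neq e}T^*_{e,e'}$, which is at most the row sum $\bar o_x^e$ because $T^*_{e,e}\ge 0$. Moves that add to $e$ only increase the coordinate. Hence every coordinate of $o_x+\sum_{m\in\mathcal{S}}\Delta_m$ is at least
\[
\bar o_x^e-\sum_{e'\neq e}T^*_{e,e'}=T^*_{e,e}\ge 0 .
\]
The virtual site $e_\varnothing$ is projected away, so it imposes no constraint. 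This also shows the precondition $o^e>0$ of \textsc{flow} and \textsc{del} holds under any ordering.

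\textbf{Conclusion.} With these hypotheses verified, Proposition~\ref{prop:equiv} gives $p^{\mathrm A}_t=p^{\mathrm B}_t$. Here $p^{\mathrm B}_t$ is exactly Eq.~\eqref{eq:edit-mixture}, so the edit-based path equals the augmented-space pushforward with the stated boundary conditions. Consequently, the per-move Bernoulli masking of Algorithm~\ref{alg:sample} samples from this path.
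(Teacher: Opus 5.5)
Your proposal is correct and follows the paper's route: the paper treats this corollary as a direct instantiation of Proposition~\ref{prop:equiv} and gives no further argument. Your explicit checks are hypotheses the paper leaves implicit, so your version is strictly more complete. They are: labelling repeated moves so that $|\mathcal{M}|=K$ (this is what makes the weights in Eq.~\eqref{eq:edit-mixture} sum to one, and it is how the per-move Bernoulli draw in Algorithm~\ref{alg:sample} has to be read); additivity of the moves, which gives order-independence; $\mathrm{ApplyEdit}(o_x,\mathcal{M})=o_y$ from the OT marginals; and the lower bound $T^*_{e,e}\ge 0$, which gives nonnegativity and the \textsc{flow}/\textsc{del} preconditions.
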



\section{Problem-statement details}
\label{app:problem_statement}


\paragraph{Atoms, adjacency, and electron sites.}
$\mathcal{V} = \{v_i \mid 1 \leq i \leq |\mathcal{V}|\}$ is the set of heavy atoms; hydrogens are not nodes but are pooled into per-atom electron sites. The adjacency $\mathcal{A} = \{a_{ij} \in \{0,1\} \mid 1 \leq i < j \leq |\mathcal{V}|\}$ follows the standard molecular graph convention~\cite{duval2023hitchhiker}. The electron-site set
$$
\mathcal{E} = \{e_{ij} \mid 1 \leq i \leq j \leq |\mathcal{V}|\} \cup \{e^H_i \mid 1 \leq i \leq |\mathcal{V}|\}
$$
has $|\mathcal{E}| = \binom{|\mathcal{V}|}{2} + 2|\mathcal{V}|$ entries: bonding sites $e_{ij}$ ($i \neq j$), lone-pair sites $e_{ii}$, and per-atom hydrogen sites $e^H_i$. Hydrogens bonded to the same heavy atom are topologically equivalent so a single $e^H_i$ suffices~\cite{chen2025predicting}. Each site is either \emph{realized} (existing bond/lone pair/hydrogen attachment) or \emph{virtual} ($o(e) = 0$, available to become realized).

\paragraph{Occupation $\Rightarrow$ topology.}
The map $f_{o \to a}: (\mathcal{V}, \mathcal{E}, o) \mapsto \mathcal{A}$ deterministically recovers the heavy-atom adjacency from the occupation via Eq.~\eqref{eq:o_to_a}. Combined with the fact that $\mathcal{V}$ (and hence $\mathcal{E}$) is preserved by a reaction, this yields the simplification
$$
p(\mathcal{G}_y \mid \mathcal{G}_x)
= p(\mathcal{V}_y, \mathcal{A}_y, \mathcal{E}_y, o_y \mid \mathcal{V}_x, \mathcal{A}_x, \mathcal{E}_x, o_x)
= p(o_y \mid \mathcal{V}_x, \mathcal{A}_x, \mathcal{E}_x, o_x).
$$

\paragraph{Ethanol example.}
Consider ethanol (CH$_3$CH$_2$OH) with heavy atoms $\mathcal{V} = \{\text{C}_1, \text{C}_2, \text{O}_3\}$. The electron sites and their occupations are:
\begin{center}
\small
\begin{tabular}{@{}lccl@{}}
\toprule
\textbf{Site} & \textbf{Electron type} & \textbf{Occupation} $o$ & \textbf{Description} \\
\midrule
$e_{1,2}$ (C$_1$--C$_2$) & Bond & 1 & Single bond \\
$e_{2,3}$ (C$_2$--O) & Bond & 1 & Single bond \\
$e_{1,3}$ (C$_1$--O) & Virtual & 0 & Virtual (no bond) \\
$e_{3,3}$ (O lone pairs) & Valence & 2 & Two lone pairs \\
$e_{1,1}$, $e_{2,2}$ & Virtual & 0 & No lone pairs on carbons \\
$e^H_1$ (C$_1$) & w/ Hydrogens & 3 & Three hydrogens \\
$e^H_2$ (C$_2$) & w/ Hydrogens & 2 & Two hydrogens \\
$e^H_3$ (O) & w/ Hydrogens & 1 & One hydrogen \\
\bottomrule
\end{tabular}
\end{center}
This gives a total of 10 occupied electron pairs across 9 sites, consistent with ethanol's 20 valence electrons.

\section{Optimal Transport Details}
\label{app:ot_details}

To handle reactions where the total electron count is not conserved, we augment the electron site set with a virtual site $e_\varnothing$ acting as a source and sink for electron pairs gained or lost. Let $\bar{\mathcal{E}} = \mathcal{E} \cup \{e_\varnothing\}$ and define augmented occupation vectors $\bar{o}_x, \bar{o}_y \in \mathbb{Z}_{\geq 0}^{|\bar{\mathcal{E}}|}$:
$$
    \bar{o}_x(e) =
    \begin{cases}
        o_x(e) & \text{if } e \in \mathcal{E}, \\
        \sum_{e' \in \mathcal{E}} \max(o_y(e') - o_x(e'),\, 0) & \text{if } e = e_\varnothing,
    \end{cases}
$$
$$
    \bar{o}_y(e) =
    \begin{cases}
        o_y(e) & \text{if } e \in \mathcal{E}, \\
        \sum_{e' \in \mathcal{E}} \max(o_x(e') - o_y(e'),\, 0) & \text{if } e = e_\varnothing,
    \end{cases}
$$
so that $\sum_{e \in \bar{\mathcal{E}}} \bar{o}_x(e) = \sum_{e \in \bar{\mathcal{E}}} \bar{o}_y(e)$, ensuring balanced transport.

\paragraph{Cost matrix.}
We define the cost matrix $C \in \mathbb{Z}_{\geq 0}^{|\bar{\mathcal{E}}| \times |\bar{\mathcal{E}}|}$ based on shortest-path distances. Let $d_x(v, v')$ and $d_y(v, v')$ denote shortest-path distances on the reactant and product graphs, respectively. Since bonds may form or break, we take the minimum over both:
$$
    C_{e,e'} = \min\!\left(
        \min_{\substack{v \in \text{atoms}(e) \\ v' \in \text{atoms}(e')}} d_x(v, v'),\;
        \min_{\substack{v \in \text{atoms}(e) \\ v' \in \text{atoms}(e')}} d_y(v, v')
    \right), \qquad \forall\, e, e' \in \mathcal{E},
$$
where $\text{atoms}(e) \subseteq \mathcal{V}$ denotes the atoms associated with site $e$:
$$
    \text{atoms}(e) =
    \begin{cases}
        \{v_i, v_j\} & \text{if } e = e_{i,j} \text{ with } i \neq j, \\
        \{v_i\} & \text{if } e = e_{i,i} \text{ or } e = e^H_i.
    \end{cases}
$$
The cost to or from the virtual site is set to a fixed large constant $C_{e,e_\varnothing} = C_{e_\varnothing,e} = c_\varnothing$, incentivizing the solver to use $e_\varnothing$ only when necessary.

\section{Algorithms}
\label{app:algorithms}

\begin{algorithm}[H]
\caption{\proc{SampleIntermediateGraph}$(\mathcal{G}_0,\; \mathbf{M},\; \kappa)$}\label{alg:sample}
\begin{algorithmic}[1]
\Require Reactant graph $\mathcal{G}_0$, set of electron moves $\mathbf{M}$, interpolation level $\kappa \in [0,1]$
\Ensure Intermediate graph $\mathcal{G}_t$, remaining target moves $\mathbf{M}_{\mathrm{target}}$
\Statex
\For{each move $(e^s, e^d) \in \mathbf{M}$}
    \State Sample $z_{s,d} \sim \mathrm{Bernoulli}(\kappa)$
\EndFor
\State $\mathbf{M}_{\mathrm{fired}} \gets \{(e^s, e^d) : z_{s,d}=1\}$ \hfill $\triangleright$ moves already applied
\State $\mathbf{M}_{\mathrm{target}} \gets \{(e^s, e^d) : z_{s,d}=0\}$ \hfill $\triangleright$ moves still to predict
\State $\mathcal{G}_t \gets \proc{ApplyElectronEdits}(\mathcal{G}_0,\; \mathbf{M}_{\mathrm{fired}})$
\State \Return $\mathcal{G}_t,\; \mathbf{M}_{\mathrm{target}}$
\end{algorithmic}
\end{algorithm}
 
\vspace{1em}
 
\begin{algorithm}[H]
\caption{\textbf{MAELLE}'s Training Loop}\label{alg:train}
\begin{algorithmic}[1]
\Require Data distribution $p(\mathbf{x}, \mathbf{y})$, model $u_\theta$, kappa schedule $\kappa(\cdot)$
\Statex
\State Sample $(\mathcal{G}_0,\; \mathcal{G}_1) \sim p(\mathbf{x}, \mathbf{y})$
\hfill $\triangleright$ reactant and product graphs
\State $\mathbf{M} \gets \proc{SolveOptimalTransport}(\mathcal{G}_0,\; \mathcal{G}_1)$
\hfill $\triangleright$ electron moves from OT plan
\State Sample $t \sim \mathrm{Uniform}(0,1)$
\State $\mathcal{G}_t,\; \mathbf{M}_{\mathrm{target}} \gets \proc{SampleIntermediateGraph}(\mathcal{G}_0,\; \mathbf{M},\; \kappa(t))$
\hfill $\triangleright$ Alg.~\ref{alg:sample}
 
\Statex
\State \textbf{// Forward pass}
\State $\boldsymbol{\lambda}^{\mathrm{flow}},\; \boldsymbol{\lambda}^{\mathrm{del}},\; \boldsymbol{\lambda}^{\mathrm{add}},\; \mathcal{E} \gets u_\theta(\mathcal{G}_t,\; \mathcal{G}_0,\; t)$
\Statex \hfill $\triangleright$ per-site flow/del rates over $\mathcal{E}$; per-atom add rates over $\mathcal{V}$
 
\Statex
\State \textbf{// Bregman divergence loss}
\State Compute $\mathcal{L}$ and update $\theta$ via back-propagation:
\begin{align}
\mathcal{L} \;=\; \sum_{e \in \mathcal{E}} \bigl(\lambda_e^{\mathrm{flow}} + \lambda_e^{\mathrm{del}}\bigr) + \sum_{a \in \mathcal{V}} \lambda_a^{\mathrm{add}}
\;-\; \frac{\kappa'(t)}{1 - \kappa(t)} \!\!\!\!\sum_{(e^s,\,e^d)\,\in\,\mathbf{M}_{\mathrm{target}}} \!\!\!\!
\begin{cases}
    \log \lambda_a^{\mathrm{add}} + \log p^{\text{sink}}_\theta(e^d \mid a,\; \mathcal{E})
    & \text{if } e^s = \varnothing \\[4pt]
    \log \lambda_{e^s}^{\mathrm{del}}
    & \text{if } e^d = \varnothing \\[4pt]
    \log \lambda_{e^s}^{\mathrm{flow}} + \log p^{\text{sink}}_\theta(e^d \mid e^s,\; \mathcal{E})
    & \text{otherwise}
\end{cases} \notag
\end{align}
\end{algorithmic}
\end{algorithm}
 
\vspace{1em}

 
\begin{algorithm}[H]
\caption{\proc{SampleMoves}$(\mathcal{G}_t,\; \boldsymbol{\lambda}^{\mathrm{flow}},\; \boldsymbol{\lambda}^{\mathrm{del}},\; \boldsymbol{\lambda}^{\mathrm{add}},\; \mathcal{E},\; \Delta t,\; \tau)$}\label{alg:sample_moves}
\begin{algorithmic}[1]
\Require Rates $\boldsymbol{\lambda}^{\mathrm{flow}},\boldsymbol{\lambda}^{\mathrm{del}}$ (per site), $\boldsymbol{\lambda}^{\mathrm{add}}$ (per atom), electron sites $\mathcal{E}$, step size $\Delta t$, temperature $\tau$
\Ensure Set of moves $\mathbf{M}_{\mathrm{step}}$
\Statex
 
\State \textbf{// Fire sites via thinned Poisson process}
\For{each site $e \in \mathcal{E}$}
    \State $p_e^{\mathrm{site}} \gets 1 - \exp\!\bigl(-(\lambda_e^{\mathrm{flow}} + \lambda_e^{\mathrm{del}})\,\Delta t \,/\, \tau \bigr)$
    \State Sample $f_e \sim \mathrm{Bernoulli}(p_e^{\mathrm{site}})$
\EndFor
\State $\mathcal{F} \gets \{e : f_e = 1\}$
\hfill $\triangleright$ fired sites
 
\Statex
\State \textbf{// Fire addition atoms}
\For{each atom $a \in \mathcal{V}$}
    \State $p_a^{\mathrm{add}} \gets 1 - \exp\!\bigl(-\lambda_a^{\mathrm{add}}\,\Delta t \,/\, \tau\bigr)$
    \State Sample $f_a \sim \mathrm{Bernoulli}(p_a^{\mathrm{add}})$
\EndFor
\State $\mathcal{A}_{\mathrm{add}} \gets \{a : f_a = 1\}$
\hfill $\triangleright$ fired addition atoms
 
\Statex
\State \textbf{// Resolve actions for fired sites}
\State $\mathbf{M}_{\mathrm{flow}},\;\mathbf{M}_{\mathrm{del}} \gets \varnothing,\; \varnothing$
\For{each $e^{\text{fired}} \in \mathcal{F}$}
    \State $\text{action} \gets
    \begin{cases}
        \textit{flow} & \text{w.p. } \lambda_e^{\mathrm{flow}} / (\lambda_e^{\mathrm{flow}} + \lambda_e^{\mathrm{del}}) \\[4pt]
        \textit{del} & \text{otherwise}
    \end{cases}$
    \If{action $=$ \textit{flow}}
        \State Sample sink $e^d \sim p^{\text{sink}}_\theta(\,\cdot \mid e^{\text{fired}},\; \mathcal{E})$
        \State $\mathbf{M}_{\mathrm{flow}} \gets \mathbf{M}_{\mathrm{flow}} \cup \{(e^{\text{fired}}, e^d)\}$
    \Else
        \State $\mathbf{M}_{\mathrm{del}} \gets \mathbf{M}_{\mathrm{del}} \cup \{(e^{\text{fired}}, \varnothing)\}$
    \EndIf
\EndFor
\State Resolve conflicts in $\mathbf{M}_{\mathrm{flow}}$: keep highest $\lambda_e^{\mathrm{flow}}$ per destination
 
\Statex
\State \textbf{// Resolve actions for fired additions}
\State $\mathbf{M}_{\mathrm{add}} \gets \varnothing$
\For{each $a \in \mathcal{A}_{\mathrm{add}}$}
    \State Sample sink $e^d \sim p^{\text{sink}}_\theta(\,\cdot \mid a,\; \mathcal{E})$
    \State $\mathbf{M}_{\mathrm{add}} \gets \mathbf{M}_{\mathrm{add}} \cup \{(\varnothing, e^d)\}$
\EndFor
 
\Statex
\State \Return $\mathbf{M}_{\mathrm{step}} \gets \mathbf{M}_{\mathrm{flow}} \cup \mathbf{M}_{\mathrm{del}} \cup \mathbf{M}_{\mathrm{add}}$
\end{algorithmic}
\end{algorithm}
 
\vspace{1em}
 
\begin{algorithm}[H]
\caption{\textbf{MAELLE}'s Inference}\label{alg:inference}
\begin{algorithmic}[1]
\Require Reactant graph $\mathcal{G}_0$, trained model $u_\theta$, number of steps $N$, temperature $\tau$
\Ensure Predicted product graph $\hat{\mathcal{G}}_1$
\Statex
 
\State $\mathcal{G}_t \gets \mathcal{G}_0$, \quad $\Delta t \gets 1/N$
\State Encode reactant: $\mathbf{z}_0 \gets \proc{Encode}_{u_\theta}(\mathcal{G}_0)$
\hfill $\triangleright$ computed once, reused at every step
 
\Statex
\For{$n = 0, \dots, N-1$}
    \State $t \gets n \,/\, N$
    \State $\Delta t_{\mathrm{adapt}} \gets \proc{AdaptiveStepSize}(\Delta t,\; t,\; \kappa)$
    \hfill $\triangleright$ adaptive $h$ from schedule
 
    \Statex
    \State \textbf{// Decode current state}
    \State $\boldsymbol{\lambda}^{\mathrm{flow}},\; \boldsymbol{\lambda}^{\mathrm{del}},\; \boldsymbol{\lambda}^{\mathrm{add}},\; \mathcal{E} \gets \proc{Decode}_{u_\theta}(\mathcal{G}_t,\; \mathbf{z}_0,\; t)$
    \Statex \hfill $\triangleright$ per-site flow/del rates; per-atom add rates; $\mathcal{E}$: electron sites in $\mathcal{G}_t$
 
    \Statex
    \State \textbf{// Sample and apply moves}
    \State $\mathbf{M}_{\mathrm{step}} \gets \proc{SampleMoves}(\mathcal{G}_t,\; \boldsymbol{\lambda}^{\mathrm{flow}},\; \boldsymbol{\lambda}^{\mathrm{del}},\; \boldsymbol{\lambda}^{\mathrm{add}},\; \mathcal{E},\; \Delta t_{\mathrm{adapt}},\; \tau)$
    \hfill $\triangleright$ Alg.~\ref{alg:sample_moves}
    \State $\mathcal{G}_t \gets \proc{ApplyElectronEdits}(\mathcal{G}_t,\; \mathbf{M}_{\mathrm{step}})$
\EndFor
 
\Statex
\State \Return $\hat{\mathcal{G}}_1 \gets \mathcal{G}_t$
\end{algorithmic}
\end{algorithm}



\section{Neural Network Architecture}
\label{app:architecture}

MAELLE uses a conditional encoder-decoder architecture. The \emph{encoder} processes the fixed reactant graph $\mathcal{G}_0$ to produce conditioning representations; the \emph{decoder} processes the time-dependent intermediate graph $\mathcal{G}_t$ and cross-attends to the encoder outputs. Both operate at two levels of granularity: atoms and electron sites (entries).

\paragraph{Notation.}
Let $d$ denote the model dimension, $N$ the number of heavy atoms in a graph, and $N_e = \binom{N}{2} + 2N$ the number of electron sites (valence, hydrogen, and bond entries). We write $\mathbf{h} \in \mathbb{R}^{N \times d}$ for atom embeddings and $\mathbf{e} \in \mathbb{R}^{N_e \times d}$ for entry embeddings.

\subsection{Graph encoder (GNN backbone)}

Both the encoder and decoder share the same GNN architecture but with separate weights. The backbone is a stack of $L$ Graph Attention Network (GAT) layers~\cite{velivckovic2017graph}. Intermediate layers use multi-head attention with 4 heads and concatenation (output dimension $4 \times d_{\text{hidden}}$), while the final layer uses a single head without concatenation (output dimension $d$). An ELU activation is applied between intermediate layers.

Input atom features are first projected via a linear layer $\mathbf{h}^{(0)} = W_{\text{node}} \mathbf{x} + \mathbf{b}_{\text{node}}$, where $\mathbf{x} \in \mathbb{R}^{N \times d_{\text{in}}}$ are the raw atom features.

\paragraph{Time embedding.}
The decoder's GNN incorporates the flow time $t \in [0, 1]$ via a sinusoidal positional embedding followed by a two-layer MLP with SiLU activation:
$$
    \mathbf{t}_{\text{emb}} = \mathrm{MLP}_{\text{time}}(\mathrm{SinEmb}(t)) \in \mathbb{R}^{d_{\text{hidden}}},
$$
which is broadcast-added to the initial node embeddings $\mathbf{h}^{(0)}$ before the GAT layers. The encoder's GNN omits the time embedding, as the reactant graph $\mathcal{G}_0$ is fixed.

\subsection{Two-level attention}

After the GNN backbone, both encoder and decoder refine representations through two stages of attention operating at different granularities.

\paragraph{Stage 1: Atom-level attention.}
The atom embeddings $\mathbf{h} \in \mathbb{R}^{N \times d}$ from the GNN are passed through a Transformer. In the \emph{encoder}, this is a TransformerEncoder (self-attention only). In the \emph{decoder}, this is a TransformerDecoder: each layer applies self-attention over the decoder's atom embeddings, followed by cross-attention to the encoder's atom embeddings. This allows the decoder to condition on the reactant atom representations at every layer.

\paragraph{Stage 2: Entry-level feature construction and attention.}
Atom embeddings are lifted to electron-site (entry) embeddings via three learned projections:
\begin{itemize}
    \item \textbf{Valence entries} (lone pairs): $\mathbf{e}^{\text{val}}_i = \mathrm{ReLU}(W_{\text{val}}[\mathbf{h}_i \,\|\, \phi(n^{\text{nb}}_i)])$ for each atom $i$, where $n^{\text{nb}}_i$ is the non-bonding electron count and $\phi(\cdot)$ is a 4-bin one-hot encoding of the electron pair count (0, 1, 2, $\geq$3 pairs).
    \item \textbf{Hydrogen entries}: $\mathbf{e}^{H}_i = \mathrm{ReLU}(W_{H}[\mathbf{h}_i \,\|\, \phi(n^{H}_i)])$ for each atom $i$, where $n^{H}_i$ is the hydrogen electron count.
    \item \textbf{Bond entries}: $\mathbf{e}^{\text{bond}}_{ij} = \mathrm{ReLU}(W_{\text{bond}}[\mathrm{agg}(\mathbf{h}_i, \mathbf{h}_j) \,\|\, \rho(d_{ij}) \,\|\, \phi(n^{\text{bond}}_{ij})])$ for each realized bond $(i,j)$, where $\mathrm{agg}(\cdot,\cdot)$ is a symmetric aggregation (sum, mean, Hadamard, or bilinear), $\rho(d_{ij})$ is a one-hot encoding of the topological distance (up to 10 hops), and $n^{\text{bond}}_{ij}$ is the bond electron count.
\end{itemize}
These three sets of entry embeddings are interleaved into a single sequence of $N_e$ entries per graph and passed through another Transformer block --- self-attention in the encoder, self-attention plus cross-attention to the encoder's entry embeddings in the decoder.

\subsection{Output heads}

The decoder produces three types of predictions from the entry embeddings, corresponding to the source--sink factorization in Eq.~\eqref{eq:source_sink}:

\paragraph{Rate head.}
A 3-layer MLP (with ReLU activations and hidden dimension $2d$) maps each entry embedding to three non-negative scalars via softplus:
$$
    [\lambda_{\theta,t}^{\mathrm{flow}}(e), \; \lambda_{\theta,t}^{\mathrm{del}}(e), \; \lambda_{\theta,t}^{\mathrm{add}}(e)] = \mathrm{softplus}(\mathrm{MLP}_{\text{rate}}(\mathbf{e})).
$$
The addition rate $\lambda_{\theta,t}^{\mathrm{add}}$ is only used for atom-level entries (valence sites); for bond entries it is masked out.

\paragraph{Sink heads.}
Two separate MLPs parameterize the sink distributions $p^{\mathrm{sink,flow}}_\theta$ and $p^{\mathrm{sink,add}}_\theta$. Given a fired source entry with embedding $\mathbf{e}^s$ and a candidate destination with embedding $\mathbf{e}^d$, the unnormalized score is:
$$
    s(e^s, e^d) = \mathrm{MLP}_{\text{sink}}([\mathbf{e}^s \,\|\, \mathbf{e}^d]),
$$
where $[\cdot \,\|\, \cdot]$ denotes concatenation and the MLP maps $\mathbb{R}^{2d} \to \mathbb{R}$. The sink distribution is obtained by applying softmax over all valid candidates. The candidate sets are kept local: for \textsc{flow}, the destination must share at least one atom with the source, restricting the candidate set to $O(|\mathcal{V}|)$ per fired source; for \textsc{add} from atom $a$, the candidates are $\{e_{a,a}\} \cup \{e^H_a\} \cup \{e_{a,j} \mid j \neq a\}$, yielding at most $|\mathcal{V}| + 1$ candidates. For virtual bond entries (destinations that do not yet exist in $\mathcal{G}_t$), representations are constructed on-the-fly from the atomic embeddings of the endpoint atoms.

\subsection{Training details}
\label{app:training_details}

The full MAELLE model has approximately \textbf{16M} trainable parameters with hidden dimension $d = 256$ throughout. The GNN backbone uses \textbf{2} GAT layers; the encoder applies \textbf{4} fully-connected attention layers (\textbf{2} atom-level + \textbf{2} entry-level), and the decoder applies \textbf{6} fully-connected attention layers (\textbf{3} atom-level + \textbf{3} entry-level), with the latter two stages including cross-attention to the encoder. We train for $\sim$\textbf{50} epochs (about \textbf{72} hours on a single \textbf{NVIDIA H100} GPU) with an effective batch size of \textbf{64}, using gradient accumulation to fit memory.

\subsection{LLM-as-a-judge for benchmarking side product prediction}
\label{app:llm_judge}

To evaluate the chemical plausibility of predicted products beyond exact-match top-$k$ accuracy (Section~\ref{sec:side_products}), we use a large language model as a chemistry judge. Concretely, we query \texttt{gemini-3-flash} with \emph{minimal} thinking effort. 
For each reaction, the LLM is given the reactant SMILES and a \emph{anonymized} list of candidate products pooled across MAELLE, Molecular Transformer, and Graph2SMILES (with the ground-truth product additionally injected as a sanity-check item). The judge has no information about which model produced which candidate. Each of the three independent runs uses a fresh shuffle and a non-zero sampling temperature, providing the standard deviation reported in Figure~\ref{fig:side_prod}a.

\paragraph{Prompt structure.}
The query consists of a system prompt followed by a per-reaction user message assembled from three templated blocks (\textsc{prefix}, \textsc{inputs}, \textsc{suffix}):

\begin{quote}
\small
\textbf{System prompt.} \emph{You are an expert organic chemist. Your task is to evaluate whether predicted products are chemically plausible outcomes of a given reaction. A product is plausible if it could reasonably arise from the listed reactants under standard organic-chemistry conditions, considering valence, connectivity, common reactivity, and likely mechanisms. You do not need the product to be the major product --- minor or side products are acceptable as long as they are mechanistically reasonable.}
\medskip

\textbf{Prefix.} \emph{Reaction \texttt{<reaction\_id>}. Reactants (SMILES): \texttt{<reactant\_smiles>}.}
\medskip

\textbf{Inputs.} \emph{Below is a list of candidate products. Evaluate each candidate independently.}
\begin{enumerate}
    \item[] \texttt{[1] <product\_smiles\_1>}
    \item[] \texttt{[2] <product\_smiles\_2>}
    \item[] \texttt{\dots}
    \item[] \texttt{[n] <product\_smiles\_n>}
\end{enumerate}

\textbf{Suffix.} \emph{For each candidate, provide a brief mechanistic reasoning and a plausibility verdict (\texttt{true}/\texttt{false}). Return your answer as a JSON object of the form:}
\begin{verbatim}
{
  "verdicts": [
    {"index": 1, "reasoning": "...", "plausible": true},
    {"index": 2, "reasoning": "...", "plausible": false},
    ...
  ]
}
\end{verbatim}
\end{quote}

\paragraph{Aggregation.}
For each reaction and each model, the top-$k$ plausibility rate is the fraction of the model's first $k$ predictions that the judge marks \texttt{plausible}. We report the mean and standard deviation over three independent LLM runs (Figure~\ref{fig:side_prod}a). To control for systematic over- or under-permissiveness of the judge, we additionally inject the ground-truth product into the candidate list and report its plausibility coverage as a sanity check; across runs we obtain $81.1 \pm 0.1\%$ ground-truth coverage, indicating a moderately conservative but stable judge.

\paragraph{Anonymization and de-duplication.}
Because MAELLE typically yields fewer unique candidates than beam-search SMILES models, for every test case we cap the per-model candidate list at the smallest number of unique predictions across the three models, ensuring a balanced comparison. Within a single query the candidate list is shuffled, and SMILES are canonicalized with RDKit before being passed to the judge so that surface-form differences do not bias the verdict.

\section{Dataset}\label{app:data}

All experiments use the USPTO-480K reaction corpus~\citep{Lowe2012, Lowe2017}.
This appendix describes the preprocessing applied to the corpus
(Appendix~\ref{app:preproc}), the standard and re-split partitionings used in the
main text, and the construction of the out-of-distribution test axes
(Appendix~\ref{app:ood}).

\subsection{Preprocessing}
\label{app:preproc}
The following steps are applied once to every reaction in the corpus, prior to
any split.

\paragraph{Re-mapping.}
USPTO-480K ships with atom-mapped SMILES of the form
\texttt{reactants>reagents>products}, but the provided maps are noisy and
occasionally inconsistent. Because MAELLE's electron-site alignment relies on a
correct atom correspondence between reactants and products
(Section~\ref{sec:problem_statement}), we discard the original maps and re-map
every reaction with RXNMapper~\citep{schwaller2021rxnmapper}, yielding a
consistent atom mapping across the corpus.

\paragraph{Electron-move interpolation.}
For each re-mapped reaction we solve the optimal-transport problem of
Section~\ref{sec:ot} once, ahead of training, to obtain the move set $\mathcal{M}$
that transforms the reactant occupation $o_x$ into the product occupation $o_y$.
These precomputed moves are cached and reused across epochs; the per-step
Bernoulli sampling of the mixture path (Section~\ref{sec:dfm}) is then applied at
training time.

\paragraph{Atom maps and model inputs.}
The re-mapped correspondence is used to construct MAELLE's occupation targets and
is therefore retained for our model. For the SMILES- and graph-based baselines,
which do not consume atom maps, maps are stripped from the inputs so that all
baselines receive the same map-free reactions they were designed for. This is a
property of each model's input format, not an asymmetry in the data: every model
sees the same reactions, and only MAELLE makes use of the atom correspondence.

\paragraph{Deduplication.}
Exact duplicates are identified by a canonical reaction key
$\text{canon}(\text{reactant}) \gg \text{canon\_main}(\text{product})$, where
$\text{canon\_main}$ selects the largest product fragment by heavy-atom count
after stripping atom maps and canonicalizing with RDKit~\citep{rdkit}. Pairwise
overlap between all splits is verified to be zero on this key.

\subsection{Standard and low-MW splits}
\label{app:base-split}
For the standard benchmark (Section~\ref{sec:uspto_480K}) we use the corpus's
original train/validation/test partition. For the out-of-distribution study
(Section~\ref{sec:ood}) we re-partition the corpus as follows. The base
train/validation/IID-test partition is constructed by Gaussian sampling over the
product molecular weight (MW), with mean \num{250}\,Da and variance
\num{1643.782}\,Da$^2$ (std $\approx 40.5$\,Da); we refer to this as the
\emph{low-MW split}. The deliberately low mean ensures the training distribution
is well separated from the high-mass OOD axis below. The pooled train+validation
set is partitioned 80/20, giving \num{304228} training reactions after
deduplication, a validation set of \num{37582}, and an IID test set of
\num{37578} drawn from the remaining half of the validation fold.

\subsection{Out-of-distribution test axes}
\label{app:ood}



\paragraph{OOD Mass (high molecular weight).}
Similary to the training corpus, a test set was constructed using Gaussian sampling with mean \num{750} Da and variance of \num{6575.13} Da$^2$. The parameters of the Gaussian were chosen in order to ensure a clear separation between this set and the training-validation one. The test set described will be called hereon \emph{test\_ood\_mass} (\num{38909} reactions, median product MW 576\,Da). Reactions that were present in \emph{test\_ood\_mass} and also in the training-validation set were eliminated in the latter.
Because the Molecular Transformer must generate the full product SMILES autoregressively, the length of the output sequence scales with molecular size; the mechanism model, which predicts only local bond edits, is invariant to this factor. This split therefore directly probes
sensitivity to molecular size.

\paragraph{OOD Ester (non-methyl ester chemistry).}
An additional axis was constructed to isolate ester-forming reactions that
produce linkages other than methyl esters -- a transformation class
under-represented in the training set relative to its synthetic
importance.
Reactions belonging to the following named reaction classes were excluded
from all training and IID evaluation splits and instead pooled into a
dedicated test set, \emph{test\_ood\_ester} (\num{1779} reactions):
\textit{Ethyl esterification}, \textit{CO\textsubscript{2}H-\textit{t}Bu
protection}, \textit{O-Acetylation}, \textit{O-Piv protection},
\textit{O-Formylation}, \textit{Acetoxy thioether synthesis},
\textit{Salol reaction}, \textit{Yamaguchi esterification},
\textit{Yamaguchi lactonization}, \textit{Shiina macrolactonization},
\textit{2-Benzofuranone synthesis}, and \textit{Baeyer--Villiger
oxidation}.
In addition, eight reaction classes that produce ester linkages of
ambiguous type (\textit{Esterification}, \textit{Ester
Schotten--Baumann}, \textit{Fischer--Speier esterification},
\textit{Steglich esterification}, \textit{Mitsunobu ester synthesis},
\textit{Transesterification}, \textit{Carboxylic anhydride alcoholysis},
and \textit{Diazoalkane esterification}) were dropped from all splits
entirely to prevent soft leakage of ester-chemistry signal into the
training distribution.

\subsection{Summary statistics}

\begin{table}[h]
\centering
\caption{Split sizes, sampling MW (mean $\pm$ std of the Gaussian over product molecular weight), and OOD criteria for \texttt{uspto\_unified\_v3}.}
\label{tab:splits}
\begin{tabular}{lrcl}
\toprule
Split & $N$ & MW (mean $\pm$ std) [Da] & OOD criterion \\
\midrule
train                  & \num{304228} & $311.9\pm 90.6$  & -- \\
val                    & \num{37582}  & $312.7 \pm 90.6$  & -- \\
test\_iid              & \num{37578}  & $311.3\pm 90.5$  & none (IID) \\
test\_ood\_mass        & \num{38909}  & $585.2 \pm 83.0$  & product MW $> 529$\,Da \\
test\_ood\_ester       & \num{1779}   & $321.1 \pm 112.1$               & non-methyl ester chemistry \\
\bottomrule
\end{tabular}
\end{table}


\newpage

\end{document}